\newcolumntype{C}{>{$}c<{$}}
\newcolumntype{L}{>{$}l<{$}}
\newcolumntype{R}{>{$}r<{$}}
\crefname{section}{Sec.}{Secs.}
\crefname{appendix}{App.}{Apps.}
\crefname{algorithm}{Alg.}{Algs.}
\definecolor{lightgray}{gray}{0.85}
\definecolor{lightlightgray}{gray}{0.9}
\definecolor{C1}{HTML}{1F77B4}
\definecolor{C2}{HTML}{FF7F0E}
\definecolor{C3}{HTML}{2CA02C}
\definecolor{C4}{HTML}{D62728}
\definecolor{C5}{HTML}{9467BD}
\colorlet{C1light}{C1!70!white}
\colorlet{C2light}{C2!70!white}
\colorlet{C3light}{C3!70!white}
\colorlet{C4light}{C4!70!white}
\colorlet{C5light}{C5!70!white}
\colorlet{C1lighter}{C1!40!white}
\colorlet{C2lighter}{C2!40!white}
\colorlet{C3lighter}{C3!40!white}
\colorlet{C4lighter}{C4!40!white}
\colorlet{C5lighter}{C5!40!white}
\colorlet{C1vlight}{C1!20!white}
\colorlet{C2vlight}{C2!20!white}
\colorlet{C3vlight}{C3!20!white}
\colorlet{C4vlight}{C4!20!white}
\colorlet{C5vlight}{C5!20!white}
\newcommand{\param}{\ensuremath{\vw}\xspace}
\newcommand{\eparam}{\ensuremath{w}\xspace}
\newcommand{\paramstar}{{\tilde{\param}}}
\newcommand{\augmentation}{\ensuremath{\boldsymbol{\eta}}\xspace}
\newcommand{\hyperparam}{\ensuremath{\vh}\xspace}
\newcommand{\vfaug}{\ensuremath{\hat{\vf}\xspace}}
\newcommand{\ggn}{\textsc{ggn}\xspace}
\newcommand{\ntk}{\textsc{ntk}\xspace}
\newcommand{\lap}{\textsc{la}\xspace}
\newcommand{\kfac}{\textsc{kfac}\xspace}
\newcommand{\map}{\textsc{MAP}\xspace}
\newcommand{\sminus}{\text{-}\xspace}
\newcommand{\vflin}{\ensuremath{\vf^\text{lin}_{\paramstar}}}
\newcommand{\iid}{\emph{i.i.d.}}
\newcommand{\wrt}{w.r.t.\xspace}
\newtheorem{theorem}{Theorem}
\newtheorem{corollary}{Corollary}[theorem]
\newtheorem{lemma}{Lemma}
\newtheorem{example}{Example}[theorem]
\def\1{\bm{1}}
\newcommand{\transpose}{^\mathrm{\textsf{\tiny T}}}
\newcommand{\inv}{^{-1}}
\newcommand{\defeq}[0]{\mathrel{\stackrel{\textnormal{\tiny def}}{=}}}
\newcommand{\E}{\mathbb{E}}
\newcommand{\R}{\mathbb{R}}
\def\sA{{\mathcal{A}}}
\def\sB{{\mathcal{B}}}
\def\sD{{\mathcal{D}}}
\def\sI{{\mathcal{I}}}
\def\sL{{\mathcal{L}}}
\def\sP{{\mathcal{P}}}
\def\sR{{\mathcal{R}}}
\def\sS{{\mathcal{S}}}
\def\sU{{\mathcal{U}}}
\def\D{{\sD}}
\def\data{{\sD}}
\def\vzero{{\mathbf{0}}}
\def\vf{{\mathbf{f}}}
\def\vg{{\mathbf{g}}}
\def\vh{{\mathbf{h}}}
\def\vw{{\mathbf{w}}}
\def\vx{{\mathbf{x}}}
\def\vy{{\mathbf{y}}}
\def\eps{{\epsilon}}
\def\vepsilon{{\boldsymbol{\eps}\xspace}}
\def\mA{{\mathbf{A}}}
\def\mB{{\mathbf{B}}}
\def\mC{{\mathbf{C}}}
\def\mH{{\mathbf{H}}}
\def\mI{{\mathbf{I}}}
\def\mJ{{\mathbf{J}}}
\def\mK{{\mathbf{K}}}
\def\mM{{\mathbf{M}}}
\def\mP{{\mathbf{P}}}
\def\mLambda{{\boldsymbol{\Lambda}\xspace}}
\def\evf{{\textnormal{f}}}
\def\emM{{\textnormal{M}}}
\DeclareMathAlphabet{\mathsfit}{\encodingdefault}{\sfdefault}{m}{sl}
\SetMathAlphabet{\mathsfit}{bold}{\encodingdefault}{\sfdefault}{bx}{n}
\newcommand{\half}{\mbox{$\frac{1}{2}$}}
\newcommand{\given}{\mbox{$|$}}
\newcommand{\la}{\mbox{$\leftarrow$}}
\newcommand{\order}[1]{\ensuremath{\mathcal{O}(#1)}}
\newcommand{\simiid}{\overset{{\scriptscriptstyle\mathrm{iid}}}{\sim}}
\icmltitlerunning{Stochastic Marginal Likelihood Gradients using Neural Tangent Kernels}
\begin{document}

\twocolumn[
\icmltitle{Stochastic Marginal Likelihood Gradients using Neural Tangent Kernels}

\begin{icmlauthorlist}
\icmlauthor{Alexander Immer}{eth,mpi}
\icmlauthor{Tycho F.A. van der Ouderaa}{imperial}
\icmlauthor{Mark van der Wilk}{imperial}\\
\icmlauthor{Gunnar R\"atsch}{eth}
\icmlauthor{Bernhard Sch\"olkopf}{mpi}
\end{icmlauthorlist}

\icmlaffiliation{eth}{Department of Computer Science, ETH Zurich, Switzerland}
\icmlaffiliation{mpi}{Max Planck Institute for Intelligent Systems, T\"ubingen, Germany}
\icmlaffiliation{imperial}{Imperial College London, UK}

\icmlcorrespondingauthor{Alexander Immer}{alexander.immer@inf.ethz.ch}

\icmlkeywords{Machine Learning, ICML, marginal likelihood, Bayesian model selection, deep learning, approximate inference, neural tangent kernel, evidence lower bound, Kronecker-factored approximate curvature}

\vskip 0.3in
]

\printAffiliationsAndNotice{}  %

\begin{abstract}
Selecting hyperparameters in deep learning greatly impacts its effectiveness but requires manual effort and expertise. Recent works show that Bayesian model selection with Laplace approximations can allow to optimize such hyperparameters just like standard neural network parameters using gradients and on the training data. However, estimating a single hyperparameter gradient requires a pass through the entire dataset, limiting the scalability of such algorithms. In this work, we overcome this issue by introducing lower bounds to the linearized Laplace approximation of the marginal likelihood. In contrast to previous estimators, these bounds are amenable to stochastic-gradient-based optimization and allow to trade off estimation accuracy against computational complexity. We derive them using the function-space form of the linearized Laplace, which can be estimated using the neural tangent kernel. Experimentally, we show that the estimators can significantly accelerate gradient-based hyperparameter optimization.

\end{abstract}

\section{Introduction}
\label{sec:introduction}
Deep learning models have many hyperparameters that need to be chosen properly to achieve good performance.
For example, making the right architecture, regularization, or data augmentation choices remains a problem that requires significant human effort and computation.
This is even apparent within the field of deep learning itself, where the state-of-the-art settings change frequently. %
Such manual trial-and-error procedure is inefficient and can hinder the application of deep learning to new problems.
A more automatic procedure could be much more sustainable.

\begin{figure}[t]
    \centering
    \includegraphics{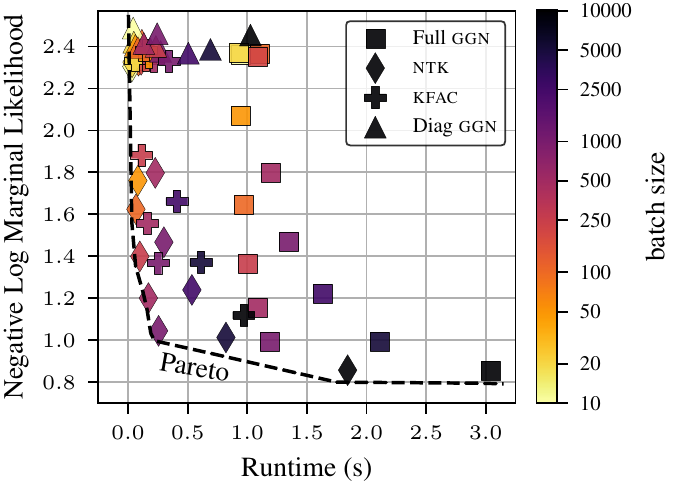}
    \vspace{-1em}
    \caption{Pareto frontier between marginal likelihood estimator tightness and runtime for invariance learning on rotated MNIST.
    Our proposed estimators enable stochastic approximation with varying batch size in contrast to the four previously used full-batch estimators (batch size of $10\,000$ in black).
    \ntk-based estimators on subsets of data perform well at more than $10\times$ speed-up.} %
    \label{fig:pareto}
    \vspace{-1em}
\end{figure}

Automated machine learing (AutoML) methods have the potential to greatly reduce the cost of deploying deep learning for new problems~\citep{hutter2019automated}.
Commonly, black-box algorithms are applied in this setting to optimize hyperparameters towards the best possible validation performance.
For example, neural architectures can be optimized by training each option to convergence and assessing the validation performance repeatedly.
However, such iterative procedures require training many models to convergence and suffer from high-dimensional hyperparameters due to their black-box nature.
We posit that (stochastic-)gradient-based optimization of hyperparameters would be more desirable.

\begin{figure*}[th!]
    \centering
    \includegraphics[width=\textwidth]{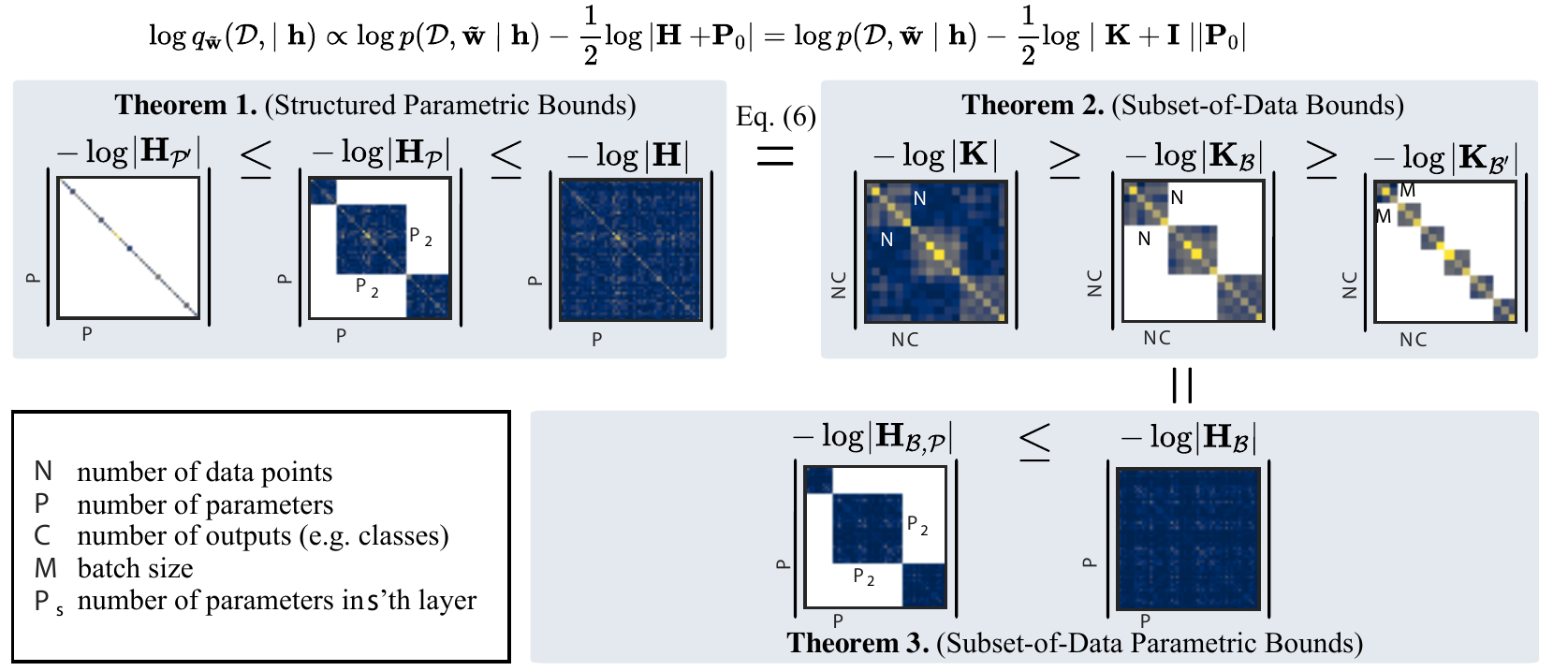}
    \vspace{-2em}
    \caption{Schematic overview of the derived lower bounds to the linearized Laplace marginal likelihood. The parametric bounds on the top left justify commonly used block-diagonal and diagonal Hessian and Gauss-Newton approximations.
    The bounds on the right are derived from the dual \ntk perspective and give a novel family of lower bounds. 
    These lower bounds enable stochastic marginal likelihood estimation and gradients with both \ntk-based estimators and parametric variants, e.g., Gauss-Newton, by translating them back (bottom).}
    \label{fig:overview}
    \vspace{-0.5em}
\end{figure*}

Bayesian model selection, where we consider hyperparameters and neural network weights jointly as part of a probabilistic model, is amenable to gradient-based optimization and also does not require any validation data~\citep{mackay2003information}.  
Gradient-based optimization of hyperparameters tends to suffer less from high dimensionality than iterative black-box methods~\citep{lorraine2020optimizing}.
Further, in such integrated procedure, the hyperparameters can be jointly optimized with the neural network weights~\citep{foresee1997gauss, immer2021scalable}.
While this is theoretically appealing, estimating the hyperparameter gradient is costly as it requires differentiating the \emph{marginal likelihood}, i.e., the normalization constant of the posterior. 

To enable Bayesian model selection of hyperparameters in deep learning with marginal likelihood gradients, \citet{immer2021scalable} recently revisited the use of the Laplace approximation~\citep{mackay1992bayesian} with modern Gauss-Newton approximations.
While these structured linearized Laplace approximations can be used for architecture comparison and gradient-based learning of data augmentations and regularization strength~\citep{immer2021scalable, immer2022probing, immer2022invariance, antoran2021linearised, daxberger2021laplace, hataya2021gradient}, each hyperparameter gradient requires a pass through the entire training dataset. 
In particular, the objective does not allow (unbiased) stochastic gradient estimation.

In the present work, we derive lower bounds to the Laplace approximation of the marginal likelihood that are amenable to stochastic-gradient-based optimization and show that currently used structured approximations are in fact lower bounds~(overview in \cref{fig:overview}).
Instead of a whole pass through the training data, our estimators require only batches of data to estimate hyperparameter gradients, which can greatly increase the scalability of such algorithms.
The size of the batches can further be chosen to achieve a trade-off between estimation quality and runtime complexity~(cf.~\cref{fig:pareto}).
We derive the lower bounds from the dual form of the linearized Laplace~\citep{khan2019approximate, immer2020improving}, which relies on the (empirical) neural tangent kernel~\citep[\ntk,][]{jacot2018neural}.
By partitioning the \ntk using batches, we obtain a lower bound to the linearized Laplace marginal likelihood, which permits unbiased stochastic estimates and gradients. %
Empirically, our estimators enable up to 25-fold acceleration of hyperparameter optimization and application to larger datasets. %

\section{Background}
\label{sec:background}
We consider supervised learning tasks with inputs $\vx_n \in \R^D$ and targets $\vy_n \in \R^C$ giving dataset $\D = \{(\vx_n, \vy_n)\}_{n=1}^N$ with $N$ pairs.
We model the targets given inputs with a neural network.
We parameterize the neural network $\vf$, as usual, with weights $\param \in \R^P$, but additionally make the dependence on eventual hyperparameters $\hyperparam$, such as architecture, explicit, and have $\vf(\vx;\param,\hyperparam) \in \R^C$.
Assuming the data are \iid, we can define a probabilistic model with likelihood $p(\data \given \param, \hyperparam) = \prod_{n=1}^N p(\vy_n \given \vx_n, \param, \hyperparam)$ that depends on the neural network.
Further, we have a prior $p(\param \given \hyperparam)$. 
\subsection{Bayesian Model Selection}
In principle, Bayesian inference makes no distinction between hyperparameters $\hyperparam$ and weights $\param$ and simply infers both jointly.
With an additional prior $p(\hyperparam)$, this means that we obtain the joint posterior $p(\param, \hyperparam \given \data) \propto p(\data \given \param, \hyperparam) p(\param \given \hyperparam) p(\hyperparam)$.
While there exist methods to approximate the posterior $p(\param \given \data, \hyperparam)$ efficiently in deep learning, Bayesian inference of the hyperparameters \hyperparam is complicated due to their heterogeneous and complicated structure, e.g., it could require a distribution over neural architectures.
Instead, it is common to perform \emph{empirical Bayes} to select hyperparameters \hyperparam by optimizing the \emph{marginal likelihood},
\begin{equation}
    \label{eq:marglik}
    p(\data \given \hyperparam) = \int p(\data \given \param, \hyperparam) p(\param \given \hyperparam) \, \mathrm{d} \param \,,
\end{equation}
which implements Occam's razor, performing a trade-off between model complexity and accuracy~\citep{rasmussen2001occam, bishop2006pattern}. 
This point approximation can work well unless the number of hyperparameters is too large, which leads to overfitting~\citep{ober2021promises}.

We can optimize the marginal likelihood by gradient ascent on the differentiable hyperparameters,
\begin{equation}
    \label{eq:marglik-grad}
   \vh_{t+1} \la \vh_{t} + \nabla_\vh \log p (\D \given \vh)\vert_{\vh = \vh_t},
\end{equation}
similarly to how we differentiate the log joint, or regularized log likelihood, to optimize neural network weights $\param$.

\subsection{Linearized Laplace Approximation}
\label{sec:lin_la}
The linearized Laplace (\lap) approximation~\citep{mackay1992bayesian} provides an approximation to the log marginal likelihood and its scalable variants are currently among the few approximations that have proven effective for deep learning hyperparameter optimization.
First, the neural network is linearized at a mode of the posterior $\paramstar$,
\begin{equation}
    \label{eq:linearization}
    \vflin(\vx;\param,\hyperparam) \defeq \vf(\vx;\paramstar,\hyperparam) + \mJ_\paramstar(\vx;\hyperparam) (\param - \paramstar),
\end{equation}
where $\mJ$ denotes the Jacobian of the neural network \wrt weights with entries $[\mJ_\paramstar]_{cp} = \frac{\partial f_c(\vx;\param,\hyperparam)}{\partial \eparam_p}\vert_{\param =\paramstar}$.
Further, we have the Hessian of the negative log likelihood \wrt $\vf$, 
\begin{equation}
    \label{eq:log_lik_hessian}
    \mLambda_\paramstar(\vx; \hyperparam) \defeq -\nabla_{\vf}^2 \log p(\vy \given \vx, \paramstar, \hyperparam) \in \R^{C \times C},
\end{equation}
which is positive semidefinite for common likelihoods used in classification and regression~\citep{murphy2012machine}.

The linearized Laplace approximation $\log q(\data \given \hyperparam)$ to the log marginal likelihood $\log p(\data \given \hyperparam)$ is then given by
\begin{equation}
\begin{aligned}
    \label{eq:marglik_la}
    \log q_\paramstar (\data &\given \hyperparam) = \log p(\data \given \paramstar, \hyperparam) + \log p(\paramstar \given \hyperparam) \\
    &- \half \log | \mJ_\paramstar\transpose \mLambda_\paramstar \mJ_\paramstar + \mP_0 | + \tfrac{P}{2} \log 2\pi,
\end{aligned}
\end{equation}
where $\mJ_\paramstar \in \R^{NC \times P}$ denotes stacked Jacobians $\mJ_\paramstar(\vx_n;\hyperparam)$, and $\mLambda_\paramstar \in \R^{NC \times NC}$ is a block-diagonal matrix with blocks $\mLambda_\paramstar(\vx_n;\hyperparam)$ for $n=1,..,N$, respectively, and we have suppressed the dependency on hyperparameters \hyperparam for notational brevity.
The matrix $\mP_0 \in \R^{P \times P}$ is the Hessian \wrt weights $\param$ of the log prior $\log p(\param \given \hyperparam)$ and is diagonal here. 
Because the first two terms of the approximation constitute the log joint, i.e., the training loss, all terms are simple to estimate and differentiate except for the (log-)determinant, which can be written equivalently~\citep{immer2021scalable} as
\begin{equation}
    \label{eq:log_det_kernel}
    | \mJ_\paramstar\transpose \mLambda_\paramstar \mJ_\paramstar + \mP_0 |
    =  | \mJ_\paramstar \mP_0\inv \mJ_\paramstar\transpose \mLambda_\paramstar + \mI| |\mP_0|,
\end{equation}
and allows us to either compute the log-determinant of a $P \times P$ or $NC \times NC$ matrix.
Unfortunately, the log-determinant prohibits unbiased stochastic gradients on batches of data and is therefore not amenable to optimization with SGD.

\subsection{Generalized Gauss-Newton and Neural Tangents}
\label{sec:ggn_and_ntk}

The two alternative ways to compute the determinant in \cref{eq:log_det_kernel} use either the generalized Gauss-Newton (\ggn) form or the neural tangent kernel (\ntk) form.
In particular, 
\begin{equation}
\label{eq:ggn_and_ntk}
    \mH \defeq \mJ_\paramstar\transpose \mLambda_\paramstar \mJ_\paramstar \,\,\, \textrm{and} \,\,\,
    \mK \defeq \mJ_\paramstar \mP_0\inv \mJ_\paramstar\transpose \mLambda_\paramstar
\end{equation}
are the \ggn approximation to the Hessian and the \ntk with additional scaling by $\mP_0\inv$ and $\mLambda_\paramstar$, respectively.\footnote{We recover the standard finite width \ntk for a Gaussian likelihood and prior with unit variance, i.e., $\mK= \mJ_\paramstar \mJ_\paramstar\transpose$.}
To obtain marginal likelihood gradients, we need to estimate and differentiate these two quantities.
However, they are intractable for deep neural networks due to large numbers of parameters $P$ and many data points and outputs $NC$.

The \ggn can be made tractable by structured block-diagonal and diagonal approximations~\citep{martens2015optimizing}.
Without loss of generality, we assume ordered parameter indices $\{1,..,P\}$. 
With a contiguous partition\footnote{A partition of a set $\sS$ is a set $\sP$ of disjoint subsets of $\sS$ whose union is equal to $\sS$. It is contiguous if the elements in $\sP$ are ordered and the order remains preserved in $\sS$.}
$\sP=\{\sP_s\}_{s=1}^S$ of parameter indices, we have
\begin{equation}
\label{eq:block_ggn}
    \mH \approx \mH_{\sP} =  
    \begin{bmatrix}
    \mH_{\sP_1} & \cdots & 0 \\
    \vdots & \ddots & \vdots \\
    0 & \cdots & \mH_{\sP_S}
    \end{bmatrix}
    = \mH_{\sP_1} \underbrace{\oplus ... \oplus}_{\mathclap{\textrm{block-diagonal stacking}}} \mH_{\sP_S},
\end{equation}
a block-diagonal approximation where each block corresponds to a subset $\sP_s$.
For example, if each $\sP_s$ corresponds to the parameters of one particular layer, we recover a block-diagonal layer-wise approximation like \kfac~\citep{martens2015optimizing}.
If each set only contains a single parameter index, $\sP_s = \{s\}$, we recover a diagonal approximation.

\subsection{Hyperparameter Optimization for Deep Learning}
\label{sec:dl_hparam_opt}
To optimize general hyperparameters, we estimate and differentiate Laplace approximations during neural network training using scalable approximations~\citep{immer2021scalable}.
The form of hyperparameters \hyperparam governs the complexity of obtaining their gradients.
For example, the typical problem of optimizing the prior precision, $\mP_0$, is relatively simple because it acts linearly on the \ggn and \ntk. However, general hyperparameters, such as invariances, affect the forward pass of the neural network and act non-linearly on \ggn and \ntk. Therefore, they require higher-order differentiation.

In the following, we illustrate the derived bounds for both settings, i.e., selecting a scalar or layer-wise prior precision (weight decay) on MNIST, and rotational invariance on rotated MNIST. 
We use a subset of $1000$ random digits and apply a small 3-layer convolutional neural network with linear output layer.
This setting ensures that we can compute the exact linearized Laplace in \cref{eq:marglik_la} as a reference.

\section{Structured Parametric Laplace Approximations are Lower Bounds}
\label{sec:bounds}
Practical linearized Laplace approximations of deep neural networks use structured \ggn approximations, such as diagonal and block-diagonal variants, to make the log-determinant computation tractable~\citep{immer2021scalable, daxberger2021laplace, antoran2021linearised}.
However, it is unclear to what extent these approximations are related to the exact linearized \lap.
The following Theorem shows that such approximations are in fact lower bounds and therefore justifies their use better.
This is similar to variational inference~\citep{blei2017variational}, where more restrictive approximations, such as a diagonal instead of a multivariate Gaussian, lead to more slack in the evidence lower bound. 

\begin{restatable}[Parametric structured lower bound]{theorem}{parametricthm}
\label{thm:parametric}
For any partitioning $\sP$ of the parameter indices $\{1,..,P\}$, the block-diagonal approximation of $\mH_\sP$ to $\mH$ results in a lower bound to the \lap log marginal likelihood (\cref{eq:marglik_la}), i.e.,
\begin{equation}
    \log q_\paramstar(\D \given \vh) \geq \log p(\D, \paramstar \given \vh) - \tfrac{1}{2} \log | \mH_{\sP} + \mP_0 | + c,
\end{equation}
where $c = \tfrac{P}{2} \log 2\pi$.
Further, a refinement\footnote{A refinement $\sP'$ of $\sP$ is a partition such that for all $\sP_s' \in \sP'$ there exists $\sP_s \in \sP$ such that $\sP_s' \subseteq \sP_s$.} of the partition $\sP$ to $\sP'$ will result in a lower bound with even more slack.
\end{restatable}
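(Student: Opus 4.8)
The plan is to reduce the stated inequality to a single determinant comparison and then invoke Fischer's determinant inequality. Since both sides of the claim carry the same log-joint term $\log p(\D, \paramstar \given \vh)$ and the same constant $c = \tfrac{P}{2}\log 2\pi$, and since \cref{eq:marglik_la} gives $\log q_\paramstar(\D\given\vh) = \log p(\D,\paramstar\given\vh) - \tfrac12 \log|\mH + \mP_0| + c$, the theorem is equivalent to
\begin{equation}
  |\mH + \mP_0| \;\le\; |\mH_\sP + \mP_0|.
\end{equation}
First I would record that $\mM \defeq \mH + \mP_0$ is symmetric positive definite: $\mH = \mJ_\paramstar\transpose \mLambda_\paramstar \mJ_\paramstar \succeq 0$ because $\mLambda_\paramstar \succeq 0$, while $\mP_0 \succ 0$ is a diagonal prior precision, so $\mM \succ 0$.

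The central step is Fischer's inequality. Because $\mP_0$ is diagonal, it is block-diagonal with respect to \emph{any} partition $\sP$; hence the diagonal blocks of $\mM$ along $\sP$ are exactly $\mM_{\sP_s} \defeq \mH_{\sP_s} + [\mP_0]_{\sP_s}$, and the block-diagonal truncation of $\mM$ is precisely $\mH_\sP + \mP_0$. Fischer's inequality states that for a symmetric positive definite matrix the determinant is at most the product of the determinants of its diagonal blocks; applied to $\mM$ with the $S$ blocks of $\sP$, this yields
\begin{equation}
  |\mM| \;\le\; \prod_{s=1}^S |\mM_{\sP_s}| \;=\; |\mH_\sP + \mP_0|,
\end{equation}
the last equality being the factorization of a block-diagonal determinant. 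Taking $-\tfrac12\log(\cdot)$ reverses the inequality and gives the bound. For a two-block partition with diagonal blocks $A,C$ and off-diagonal block $B$, Fischer follows from the Schur-complement identity $|\mM| = |A|\,|C - B\transpose A\inv B|$ together with $C - B\transpose A\inv B \preceq C$ (since $B\transpose A\inv B \succeq 0$) and monotonicity of $\det$ on positive definite matrices; the general case follows by induction on $S$. If $\sP$ is non-contiguous I would first conjugate $\mM$ by the permutation that sorts the blocks, which leaves every determinant unchanged.

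For the refinement claim, let $\sP'$ refine $\sP$, so each $\sP_s$ is partitioned by the sub-blocks $\sP'_t \subseteq \sP_s$. Applying Fischer \emph{within} each block $\mM_{\sP_s}$ along this induced sub-partition gives $|\mM_{\sP_s}| \le \prod_{\sP'_t \subseteq \sP_s} |\mM_{\sP'_t}|$. Multiplying over $s$ and factorizing block-diagonal determinants again,
\begin{equation}
  |\mH_\sP + \mP_0| = \prod_{s} |\mM_{\sP_s}| \;\le\; \prod_{t} |\mM_{\sP'_t}| = |\mH_{\sP'} + \mP_0|,
\end{equation}
so the refined determinant is at least as large and $-\tfrac12\log|\mH_{\sP'}+\mP_0| \le -\tfrac12\log|\mH_\sP+\mP_0|$: the refined bound is no larger, i.e., it carries at least as much slack.

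The only genuine subtlety, and the step I would be most careful about, is ensuring that the block-diagonal truncation of $\mM$ coincides with $\mH_\sP + \mP_0$; this is exactly what the diagonality (more generally, $\sP$-block-diagonality) of $\mP_0$ guarantees, and it is what lets the prior term pass cleanly through the inequality. Everything else is a direct consequence of Fischer's inequality, whose base case I would justify via the Schur complement as above. No eigenvalue or trace estimates are needed: the argument is purely a statement about determinant monotonicity under block-diagonal truncation.
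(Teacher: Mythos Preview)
Your proof is correct and follows essentially the same route as the paper: reduce to the determinant inequality $|\mH+\mP_0|\le|\mH_\sP+\mP_0|$, use diagonality of $\mP_0$ so that the block-diagonal truncation of $\mM=\mH+\mP_0$ equals $\mH_\sP+\mP_0$, and then apply Fischer's inequality (the paper packages the iterated application as a separate lemma, whereas you do it inline and additionally sketch the Schur-complement proof of Fischer). The refinement argument is likewise identical in spirit.
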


\begin{example}
The block-diagonal Laplace approximation that only considers correlations within layers, similar to the popular \kfac\footnote{Note that \kfac is a further approximation to the block-diagonal and a bound is not theoretically guaranteed. Nonetheless, it does seem to hold in practice.}-Laplace~\citep{ritter2018scalable, daxberger2021laplace}, is a lower bound to the linearized Laplace approximation.
A diagonal Laplace approximation that only considers marginal variances of parameters is a further lower bound of such block-diagonal approximation.
\end{example}

For a proof, refer to \cref{app:theory}.
The bound holds for commonly used block-diagonal approximations and is further agnostic to the type of Hessian approximation.
That is, it holds for the Hessian, \ggn, or (empirical) Fisher variants of the \lap alike.
In \cref{fig:parametric_bound}, we show the bound for different hyperparameter values $\hyperparam$ of the prior precision and rotational invariance in the illustrative setting described in \cref{sec:dl_hparam_opt}.
For the prior precision, only the diagonal approximation has a large slack and significantly different optimum, which is in line with previous empirical observations~\citep{mackay1995probable, daxberger2021laplace}.
Perhaps surprisingly, it can still be useful to select the right rotational invariance because its shape and thus optimum is in line with the better approximations.
\begin{figure}[t]
    \centering
    \includegraphics[width=\columnwidth]{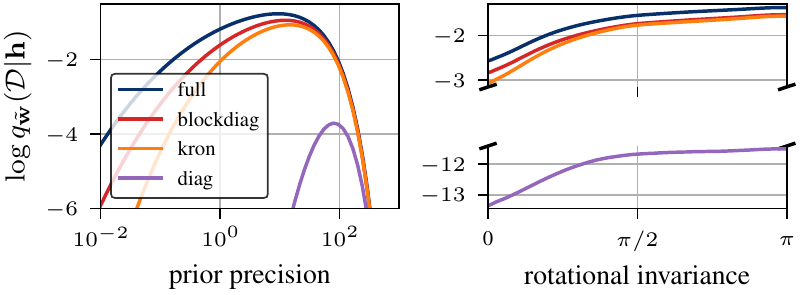}
    \vspace{-2em}
    \caption{Illustration of the parametric bound (\cref{thm:parametric}) with commonly used approximation structures for prior precision and rotational invariance as hyperparameters \hyperparam on MNIST with CNN.}
    \vspace{-1em}
    \label{fig:parametric_bound}
\end{figure}

While the above bounds justify existing Laplace approximations, they do not improve scalability with dataset size, because we have partitioned the parameters rather than the dataset. 
To do so, we employ the dual \ntk representation.

\section{Stochastic Gradients using the \ntk}
\label{sec:method}
While structured parametric lower bounds to the linearized Laplace marginal likelihood give good performance on model selection, a single hyperparameter gradient still requires an entire pass through the training data.
This is because the log-determinant is in general not separable across data points and computing it on a subset of data would lead to an uncontrolled upper bound to the marginal likelihood.
To overcome this limitation, we use the log-determinant in its dual \ntk form in \cref{eq:log_det_kernel} to devise lower bounds that enable batched computation on subsets of data and thus stochastic-gradient-based optimization.
Our bounds work for \ntk~(\cref{sec:kernel_bound}) or parametric and structured \ggn~(\cref{sec:parametric_sod_bound}) estimators alike and enable tightening the lower bound using larger batches of data or improving the partitioning of data points into batches~(\cref{sec:partitioning}).

\subsection{Subset-of-Data Kernel Bound}
\label{sec:kernel_bound}
Using the \ntk form of the log-determinant, we construct lower bounds for subsets of data that enable stochastic marginal likelihood gradients and can therefore greatly improve the efficiency of hyperparameter optimization.
Due to the cubic scaling with the number of data points, the \ntk form has so far been only used in special cases~\citep{immer2021scalable, antoran2022probabilistic}.
Using our bounds, however, the \ntk form on subsets of data becomes a tractable alternative for parametric variants like \kfac.
Further, computing the \ntk can be more architecture-agnostic~\citep{novak2019neural, novak2022fast} because it relies on plain Jacobians while structured approximations like \kfac are often non-trivial to compute or extend to custom architectures~\citep{dangel2019backpack, osawa2021asdfghjkl, kfac-jax2022github}.

Instead of partitioning the parameters as in \cref{thm:parametric}, we partition the $N$ inputs and $C$ outputs and make use of the \ntk form, which naively requires computing a $NC \times NC$ matrix, to obtain a lower bound for subsets of data. 
In particular, we have the set of indices $\sI = \{(n, c) \mid 1 \leq n \leq N, 1 \leq c \leq C\}$ corresponding to inputs $\vx_n$ and outputs $\evf_c$ of the neural network.
The index order is fixed \wrt $\mK$ but arbitrary.
With partitioning $\sB = \{\sB_m\}_{m=1}^M$ of $\sI$, we have
\begin{equation}
    \label{eq:block_kernel}
    \mK \approx \mK_{\sB} = \mK_{\sB_1} \oplus \cdots \oplus \mK_{\sB_M},
\end{equation}
where each $\mK_{\sB_m}$ is a \ntk on a subset of data and outputs.
Mathematically, we have $\mK_{\sB_m} = \mJ_{\sB_m} \mP_0\inv \mJ_{\sB_m}\transpose \mLambda_{\sB_m}$ where $\mJ_{\sB_m} \in \R^{|\sB_m| \times P}$ are the Jacobians for the input-output pairs and $\mLambda_{\sB_m} \in \R^{|\sB_m| \times |\sB_m|}$ the corresponding log-likelihood Hessian terms. 

\begin{restatable}[Data subset lower bound]{theorem}{kernelthm}
\label{thm:kernel}
For any partitioning $\sB$ of inputs and outputs $\sI$, the corresponding block-diagonal \ntk approximation $\mK_{\sB}$ to $\mK$ results in a lower bound to the linearized Laplace marginal likelihood, i.e.,
\begin{equation}
\begin{aligned}
\label{eq:kernel_bound}
    &\log q_\paramstar(\D \given \vh) \geq \log p(\D, \paramstar \given \hyperparam) - \tfrac{1}{2} \log |\mK_{\sB}+\mI| |\mP_0| + c \\
    &\propto \log p(\D, \paramstar \given \hyperparam) - \tfrac{1}{2} \log | \mP_0 | 
    - \tfrac{1}{2} \textstyle{\sum_{m=1}^M} \log |\mK_{\sB_m} + \mI|,
\end{aligned}
\end{equation}
where $c = \tfrac{P}{2}\log 2\pi$ is left out in the second line indicated by the proportionality.
Further, a refinement of partition $\sB$ to $\sB'$ will result in a lower bound with more slack.
\end{restatable}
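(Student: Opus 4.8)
The plan is to reduce the claimed inequality to a single determinant inequality and then invoke the Fischer (Hadamard--Fischer) determinant inequality, exactly mirroring the parametric argument behind \cref{thm:parametric} but now on the function-space side. By the log-determinant identity in \cref{eq:log_det_kernel}, the exact objective carries the term $-\tfrac12\log|\mK+\mI||\mP_0|$, and since $|\mP_0|$ is common to both sides the bound is equivalent to
\[
    |\mK + \mI| \le |\mK_\sB + \mI| = \textstyle\prod_{m=1}^M |\mK_{\sB_m} + \mI|,
\]
where the factorization is just block-diagonality of $\mK_\sB$. So the whole statement comes down to establishing this determinant inequality, after which monotonicity of $\log$ flips it (through the $-\tfrac12\log$) into the stated lower bound.

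The main obstacle is that $\mK = \mJ_\paramstar \mP_0\inv \mJ_\paramstar\transpose \mLambda_\paramstar$ is a product of two symmetric positive-semidefinite matrices and hence \emph{not} symmetric, whereas Fischer's inequality requires a symmetric positive-definite matrix. I would therefore first symmetrize. Writing $\mA \defeq \mJ_\paramstar \mP_0\inv \mJ_\paramstar\transpose \succeq 0$ and applying the push-through identity $|\mI + \mU\mV| = |\mI + \mV\mU|$ with $\mU = \mA\,\mLambda_\paramstar^{1/2}$ and $\mV = \mLambda_\paramstar^{1/2}$ gives
\[
    |\mK + \mI| = |\mI + \mA\,\mLambda_\paramstar| = |\mLambda_\paramstar^{1/2}\mA\,\mLambda_\paramstar^{1/2} + \mI| = |\tilde{\mK} + \mI|,
\]
where $\tilde{\mK} \defeq \mLambda_\paramstar^{1/2}\mA\,\mLambda_\paramstar^{1/2}$ is symmetric positive-semidefinite, so $\tilde{\mK}+\mI$ is positive definite. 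The same symmetrization applied blockwise yields $|\mK_{\sB_m}+\mI| = |\tilde{\mK}_{\sB_m}+\mI|$ with $\tilde{\mK}_{\sB_m} \defeq \mLambda_{\sB_m}^{1/2}\mJ_{\sB_m}\mP_0\inv\mJ_{\sB_m}\transpose\mLambda_{\sB_m}^{1/2}$.

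The second step is to identify $\tilde{\mK}_{\sB_m}$ as the $\sB_m$-th diagonal block of $\tilde{\mK}$; this is where the block structure of the likelihood Hessian enters. Since $\mLambda_\paramstar$, and hence $\mLambda_\paramstar^{1/2}$, is block-diagonal across data points, as long as the partition $\sB$ keeps the $C$ outputs of each datum together (the relevant regime for batching), $\mLambda_\paramstar^{1/2}$ is block-diagonal with respect to $\sB$ and the $\sB_m$-rows of $\mLambda_\paramstar^{1/2}\mJ_\paramstar$ coincide with $\mLambda_{\sB_m}^{1/2}\mJ_{\sB_m}$. Consequently $(\tilde{\mK})_{\sB_m\sB_m} = \tilde{\mK}_{\sB_m}$, so the diagonal blocks of $\tilde{\mK}+\mI$ are exactly $\tilde{\mK}_{\sB_m}+\mI$, and Fischer's inequality for the positive-definite matrix $\tilde{\mK}+\mI$ gives $|\tilde{\mK}+\mI| \le \prod_m |\tilde{\mK}_{\sB_m}+\mI|$, which is precisely the inequality needed after translating back through the symmetrization.

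For the refinement claim I would apply Fischer's inequality once more, now within each block: refining $\sB$ to $\sB'$ splits every $\sB_m$ into sub-blocks, and Fischer applied to the positive-definite block $\tilde{\mK}_{\sB_m}+\mI$ gives $|\tilde{\mK}_{\sB_m}+\mI| \le \prod_{\sB'_k \subseteq \sB_m} |\tilde{\mK}_{\sB'_k}+\mI|$; taking the product over $m$ yields $|\mK_\sB+\mI| \le |\mK_{\sB'}+\mI|$, so the refined bound sits strictly below the coarser one, i.e., has more slack. Throughout, the only genuine difficulty is the non-symmetry of $\mK$ and the care needed so that symmetrization preserves the block partition; once those are handled the determinant inequalities are immediate, and the equality case of Fischer (vanishing off-diagonal \ntk blocks) even characterizes exactly when the bound is tight.
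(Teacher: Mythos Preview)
Your route is the paper's: reduce the bound to a determinant inequality on the kernel side and invoke Fischer (\cref{lem:partition_fischer}); the refinement claim then follows by iterating Fischer inside each block, exactly as you do. The paper's proof is in fact shorter because it applies \cref{lem:partition_fischer} directly to $\hat\mK=\mK+\mI$ and reads off $\hat\mK_{\sB_m}=\mK_{\sB_m}+\mI$ as the diagonal blocks. Your symmetrization step is a genuine addition the paper omits: \cref{lem:partition_fischer} is stated for (symmetric) positive-definite matrices, yet $\mK+\mI=\mJ_\paramstar\mP_0\inv\mJ_\paramstar\transpose\mLambda_\paramstar+\mI$ is not symmetric in general, so the paper's direct appeal is not literally justified. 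Your push-through to $\tilde\mK+\mI=\mLambda_\paramstar^{1/2}\mJ_\paramstar\mP_0\inv\mJ_\paramstar\transpose\mLambda_\paramstar^{1/2}+\mI$ repairs this cleanly and recovers the same determinants.

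You are also right to flag the block-identification step. Equating $(\tilde\mK)_{\sB_m\sB_m}$ with $\mLambda_{\sB_m}^{1/2}\mJ_{\sB_m}\mP_0\inv\mJ_{\sB_m}\transpose\mLambda_{\sB_m}^{1/2}$ needs $\mLambda_\paramstar^{1/2}$ to be block-diagonal with respect to $\sB$, which holds when each $\sB_m$ contains whole data points (since $\mLambda_\paramstar$ is block-diagonal across data by construction) or when $\mLambda_\paramstar$ is itself diagonal (e.g.\ Gaussian likelihood). The paper's proof makes the same identification implicitly without stating this hypothesis. For partitions that split the outputs of a single datum under a non-diagonal $\mLambda_\paramstar$---precisely the output-wise regime of \cref{sec:partitioning} with softmax---the Fischer argument does not go through as written, and one can construct $2\times2$ instances with $\mA,\mLambda\succ 0$ where $|\mA\mLambda+\mI|>\prod_m(\mA_{mm}\mLambda_{mm}+1)$. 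So your restriction is not over-cautious; it marks the boundary of what this proof technique actually establishes.
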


\begin{figure}[t]
    \centering
    \includegraphics[width=\columnwidth]{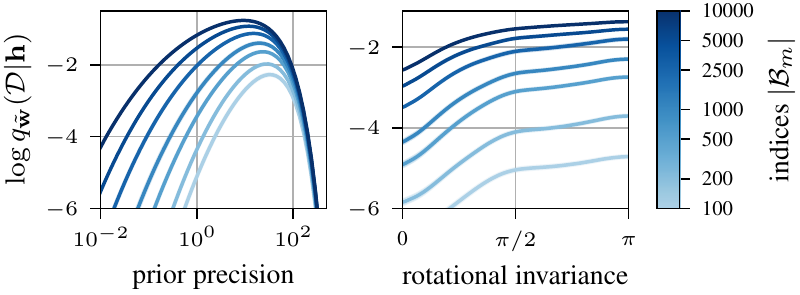}
    \vspace{-2em}
    \caption{Illustration of the subset-of-data \ntk lower bound~(\cref{thm:kernel}) with varying subset sizes for prior precision and rotational invariance as hyperparameters \hyperparam on the two MNIST variants.}
    \label{fig:ntk_bound_grid}
\end{figure}

We give a proof in \cref{app:theory}.
\cref{thm:kernel} shows that we can use the \ntk on subsets of data and outputs (for multi-output cases like classification) and obtain a valid lower bound.
Further, the form of \cref{eq:kernel_bound} allows unbiased stochastic estimation, gradients, and therefore SGD-based optimization of hyperparameters using (mini-)batches of data.
That is, by uniformly sampling index sets $\sB_m$, we have
\begin{equation}
\label{eq:stochastic_estimation}
    \textstyle{\sum_{m=1}^M} \log | \mK_{\sB_m} + \mI | = M \E_{m \sim \sU[M]} [\log |\mK_{\sB_m} + \mI|].
\end{equation}
In \cref{fig:ntk_bound_grid}, we show the tightness of the bound for different subset sizes $|\sB_m|$ on the two illustrative problems.
Despite small subset sizes, the bounds are tighter than the crude diagonal approximation and already around $2$\% of the input-output pairs can suffice to select good hyperparameters.

\subsection{Subset-of-Data Parametric Bounds}

\label{sec:parametric_sod_bound}
Computing the \ntk on subsets of data can be efficient but is non-trivial to parallelize, in which case parametric approximations like \kfac might be preferable~\citep{osawa2019large}.
Using the \ntk lower bound in \cref{thm:kernel}, we can devise an equivalent parametric estimator, which gives a lower bound on subsets of data (and parameters).
This enables the use of approximations like \kfac with Laplace on batches of data and thus for SGD-based hyperparameter optimization.

Defining $\mH_{\sB_m} \defeq \mH_{\sB_m}=\mJ_{\sB_m}\transpose \mLambda_{\sB_m} \mJ_{\sB_m}$ as a form of \ggn on the subset of inputs and outputs $\sB_m$, we have equivalence to the \ntk bound, which we show in \cref{app:theory}:

\begin{restatable}[Parametric data subset bound]{theorem}{paramthm}
\label{thm:parametric_sod_bound}
The \ntk-based lower bound in \cref{eq:kernel_bound} to the linearized Laplace marginal likelihood is equivalent to a parametric variant,
\begin{align}
\label{eq:parametric_sod}
    &\log p(\D, \paramstar \given \hyperparam)\!-\! \tfrac{1}{2} \log | \mP_0 | 
    \!-\! \tfrac{1}{2} \textstyle{\sum_{m=1}^M} \log |\mK_{\sB_m}\!\!+\!\mI| \\
    \!&=\!\log p(\D, \paramstar \given \hyperparam)\!+\!\tfrac{M\sminus1}{2} \log |\mP_0 |
    \sminus \half \textstyle{\sum_{m=1}^M} \log |\mH_{\sB_m}\!\!\!+\!\mP_0|, \nonumber
\end{align}
where $\mH_{\sB_m}=\mJ_{\sB_m}\transpose \mLambda_{\sB_m} \mJ_{\sB_m}$ is estimated on a data subset. 
\end{restatable}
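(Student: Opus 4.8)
The plan is to reduce the claimed identity to a per-block application of the determinant identity already established in \cref{eq:log_det_kernel}. The key observation is that the Sylvester-type relation $|\mJ_\paramstar\transpose \mLambda_\paramstar \mJ_\paramstar + \mP_0| = |\mJ_\paramstar \mP_0\inv \mJ_\paramstar\transpose \mLambda_\paramstar + \mI|\,|\mP_0|$ does not rely on any special structure of the stacked Jacobian; it holds for any matrix of conformable dimensions together with the shared prior precision $\mP_0$ and a likelihood Hessian block. Since $\mJ_{\sB_m}$ is simply the submatrix of rows of $\mJ_\paramstar$ indexed by $\sB_m$, and the same full $\mP_0$ enters each block's definitions $\mK_{\sB_m} = \mJ_{\sB_m}\mP_0\inv\mJ_{\sB_m}\transpose\mLambda_{\sB_m}$ and $\mH_{\sB_m} = \mJ_{\sB_m}\transpose\mLambda_{\sB_m}\mJ_{\sB_m}$, I would first apply \cref{eq:log_det_kernel} block-by-block to obtain
\[
|\mH_{\sB_m} + \mP_0| = |\mK_{\sB_m} + \mI|\,|\mP_0| \qquad \text{for each } m = 1,\dots,M.
\]

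Taking logarithms and rearranging gives $\log|\mK_{\sB_m}+\mI| = \log|\mH_{\sB_m}+\mP_0| - \log|\mP_0|$. The second step is to substitute this into the left-hand side of \cref{eq:parametric_sod} and collect terms. Summing over the $M$ blocks produces $M$ copies of $+\tfrac{1}{2}\log|\mP_0|$, which combine with the leading $-\tfrac{1}{2}\log|\mP_0|$ term to give $\tfrac{M-1}{2}\log|\mP_0|$, while the $-\tfrac{1}{2}\sum_m \log|\mH_{\sB_m}+\mP_0|$ term appears unchanged. This is exactly the right-hand side, and the shared log-joint term $\log p(\D,\paramstar\given\hyperparam)$ cancels trivially on both sides, completing the equivalence.

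There is essentially no analytic obstacle here: the result is a bookkeeping consequence of the matrix determinant lemma applied per block. The only point that warrants care — and the thing I would state explicitly to justify the block-wise use of \cref{eq:log_det_kernel} — is verifying the dimensions: $\mK_{\sB_m} + \mI$ is an $|\sB_m| \times |\sB_m|$ matrix whereas $\mH_{\sB_m} + \mP_0$ is $P \times P$, and the identity correctly bridges these two sizes via the Weinstein--Aronszajn identity, with the factor $|\mP_0|$ accounting for the change between the $P$-dimensional parameter space and the $|\sB_m|$-dimensional function space. Once this is noted, the proof is a one-line substitution followed by arithmetic on the $\log|\mP_0|$ coefficients.
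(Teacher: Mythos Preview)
Your proposal is correct and matches the paper's own proof essentially step for step: both apply the matrix determinant lemma per block to obtain $|\mH_{\sB_m}+\mP_0|=|\mK_{\sB_m}+\mI|\,|\mP_0|$, take logs, and then collect the $M$ copies of $\tfrac{1}{2}\log|\mP_0|$ against the leading $-\tfrac{1}{2}\log|\mP_0|$ to produce the $\tfrac{M-1}{2}\log|\mP_0|$ coefficient. Your explicit remark on the dimension mismatch bridged by the Weinstein--Aronszajn identity is a nice addition that the paper leaves implicit.
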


This Theorem is useful because it shows how to do stochastic estimation of the marginal likelihood on a subset of data using a parametric form, which resembles the \ggn, instead of the \ntk.
However, the full \ggn is quadratic in the number of parameters and cannot be estimated in deep learning settings.
To overcome this, we can further combine it with the parametric lower bound~(\cref{thm:parametric}) to justify structured parametric approximations like the block-diagonal and its scalable approximation, \kfac, on subsets of data.

\begin{restatable}[Parametric doubly lower bound]{corollary}{doublycor}
\label{cor:doubly_bound}
For any partitioning $\sB$ of input-output pairs $\sI$ and $\sP$ of parameter indices $\{1, .., P\}$, respectively, the following lower bound to the linearized Laplace marginal likelihood holds:
\begin{align}
\label{eq:parametric_double_bound}
    \log q_\paramstar(\D \given \hyperparam) \geq &\log p(\D, \paramstar \given \hyperparam) + \tfrac{M-1}{2} \log | \mP_0 | \\
    &- \half \textstyle{\sum_{m=1}^M} \log | \mH_{\sB_m,\sP} + \mP_0| + \tfrac{P}{2} \log 2\pi, \nonumber
\end{align}
where $\mH_{\sB_m,\sP}$ is a block-structured \ggn approximation on subsets of input-output pairs.
\end{restatable}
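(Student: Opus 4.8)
The plan is to chain the three results already established, composing the data-subset bound with the parameter-partition bound so that the two partitions $\sB$ and $\sP$ act independently. First I would start from \cref{thm:kernel} in its full (non-proportional) form, which retains the constant $c = \tfrac{P}{2}\log 2\pi$, and immediately rewrite its right-hand side using the exact identity of \cref{thm:parametric_sod_bound}. This produces the intermediate inequality
\begin{equation*}
\log q_\paramstar(\D \given \hyperparam) \geq \log p(\D, \paramstar \given \hyperparam) + \tfrac{M-1}{2}\log|\mP_0| - \half \textstyle\sum_{m=1}^M \log|\mH_{\sB_m} + \mP_0| + \tfrac{P}{2}\log 2\pi,
\end{equation*}
which is precisely the claimed bound but with the \emph{full} per-subset Gauss--Newton matrices $\mH_{\sB_m} = \mJ_{\sB_m}\transpose \mLambda_{\sB_m}\mJ_{\sB_m}$ in place of their block-structured approximations.

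The remaining step is to replace each $\mH_{\sB_m}$ by its block-diagonal approximation $\mH_{\sB_m,\sP}$ with respect to the parameter partition $\sP$, and the key observation is that the only content of \cref{thm:parametric} that I need is its underlying determinant inequality. Concretely, the proof of \cref{thm:parametric} rests on the fact that for any positive semidefinite $\mH$ and any diagonal (hence block-compatible) $\mP_0$ one has $|\mH + \mP_0| \leq |\mH_{\sP} + \mP_0|$, which follows from Fischer's inequality applied blockwise, using that $\mP_0$ diagonal implies that $(\mH + \mP_0)$ restricted to block $\sP_s$ equals $\mH_{\sP_s} + (\mP_0)_{\sP_s}$. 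Since each per-subset matrix $\mH_{\sB_m} = \mJ_{\sB_m}\transpose \mLambda_{\sB_m}\mJ_{\sB_m}$ is positive semidefinite (because $\mLambda_{\sB_m}$ is), this inequality applies verbatim with $\mH = \mH_{\sB_m}$, giving $|\mH_{\sB_m} + \mP_0| \leq |\mH_{\sB_m,\sP} + \mP_0|$ for every $m$.

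Finally I would chain the inequalities: taking logarithms, negating, and summing over $m$ yields $-\half\sum_m \log|\mH_{\sB_m} + \mP_0| \geq -\half\sum_m \log|\mH_{\sB_m,\sP} + \mP_0|$, and substituting this into the intermediate inequality above produces exactly \cref{eq:parametric_double_bound}. I expect the main obstacle to be bookkeeping rather than conceptual: one must carry the constant $\tfrac{P}{2}\log 2\pi$ through correctly (it is dropped in the proportional second line of \cref{thm:kernel} but must be reinstated here), and one must confirm that the block structure $\sP$ interacts cleanly with $\mP_0$ on each subset, i.e. that $\mP_0$ is block-diagonal with respect to $\sP$, which holds because $\mP_0$ is diagonal. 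Provided these two points are checked, the corollary follows by transitivity of the two bounds.
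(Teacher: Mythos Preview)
Your proposal is correct and follows essentially the same approach as the paper, which simply states that the corollary follows from applying \cref{thm:parametric} to the parametric subset-of-data estimator in \cref{thm:parametric_sod_bound}. Your version is more explicit about the bookkeeping (carrying the constant, checking that $\mP_0$ is compatible with $\sP$, and noting positive semidefiniteness of each $\mH_{\sB_m}$), but the logical structure is identical.
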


\begin{figure}[t]
    \centering
    \includegraphics[width=\columnwidth]{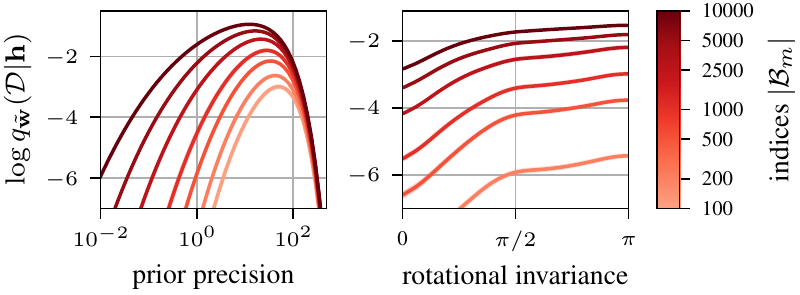}
    \vspace{-2em}
    \caption{Illustration of the parametric doubly lower bound using the block-diagonal \ggn~(\cref{cor:doubly_bound}) with varying subset sizes for prior and invariance hyperparameters. Setup as in \cref{fig:ntk_bound_grid}.} 
    \label{fig:doubly_bound}
\end{figure}

This doubly lower bound shows how parametric estimators, the most frequently used ones, can also be used for stochastic estimation and optimization of the log marginal likelihood.
In \cref{fig:doubly_bound}, we show the tightness of such bounds for the block-diagonal \ggn.
Surprisingly, even with just $5\%$ of the input-output pairs, the bound is tighter then the diagonal approximation.
Further, it only incurs a slight increase in slack compared to its \ntk-based upper bound in \cref{fig:ntk_bound_grid}.
In our experiments, we find that \cref{cor:doubly_bound} successfully enables stochastic-gradient-based hyperparameter optimization with \kfac, which previously could only be used in the full-batch setting.

\subsection{Tighter Bounds with Better Partitions}
\label{sec:partitioning}

\begin{figure}[t]
    \centering
    \includegraphics[width=\columnwidth]{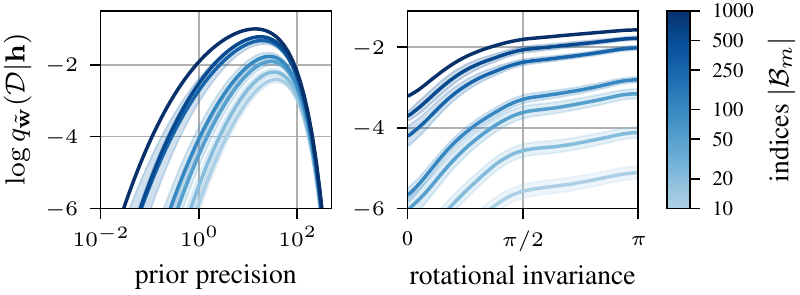}
    \vspace{-2em}
    \caption{\ntk Laplace bound using output-wise partitioning leads to tighter bounds and more efficiency. In comparison to \cref{fig:ntk_bound_grid}, the index set sizes are $C=10$ times smaller corresponding to an improved complexity by a factor of $C^3$ at similar tightness.}
    \vspace{-1em}
    \label{fig:classwise}
\end{figure}

The tightness of the lower bounds derived from the \ntk variant can be controlled with the partitioning $\sB$ into input-output pairs.
$\sB_m$ can be understood as batches like in parameter optimization using SGD.
The bounds derived from the \ntk motivate various choices of partitioning input data points and output dimensions so as to make the bounds as tight as possible.
Mathematically, we want to find the partition $\sB$ that maximizes the lower bound in \cref{eq:kernel_bound}.
We discuss two simple partitioning cases $\sB$ that can make the bounds derived from the \ntk potentially tighter: partitioning output dimensions and grouping inputs by labels.

Partitioning the \ntk by outputs means that each partition $\sB_m$ contains only one particular output and corresponds to an independent kernel as, for example, used in Gaussian processes~\citep{rasmussen2006gaussian}.
Each $\mK_{\sB_m}$ is then an output-wise \ntk, which can be much more efficient to compute than a full \ntk.
In the parametric space, this corresponds to a \ggn for a single output and also greatly improves efficiency as it makes the \ggn $C$ times cheaper and is theoretically justified through a bound.
\cref{fig:classwise} shows that the output-wise \ntk-based marginal likelihood bound is almost as tight as the full one at more than $C$ times smaller cost.
Already $0.2$\% of the input-output pairs, or equivalently $2$\% of the data, suffice to learn invariances. %
Across the entire design space of estimators, it is this class-wise partitioning that provides many Pareto-optimal estimators in \cref{fig:pareto}.

Alternatively, we can partition the index set $\sI$ using the label information we have for each $\vx_n$.
In classification, we have access to labels $y_n$ for each $\vx_n$.
Assuming that the correlation of the \ntk intra-class is greater than the anticorrelation inter-class, it is reasonable to partition $\sI$ in such a way that data points $\vx_n$ with the same labels are in the same subset(s).
In practice, this means that batches are made up from data points within the same class.
We find that this approach can perform well but incurs a larger variance~(cf.~\cref{app:fig:grouped} in \cref{app:illustration}). %
In future work, it could be interesting to develop methods that track (anti-)correlation between inputs in the \ntk to group them optimally during training. 
This is similar to inducing point optimization~\citep{titsias2009_variational_gp}.

\subsection{Family of Estimators and Algorithm}
Through the \ntk-based lower bound in \cref{thm:kernel}, we derived estimators that enable stochastic marginal likelihood estimates and gradients. 
This allows us to optimize hyperparameters with SGD, just like neural network parameters.
In particular, \cref{eq:stochastic_estimation} shows that we can obtain a stochastic unbiased estimator of the lower bounds on the log-determinants derived from the \ntk.
The remaining terms are the log likelihood, which allows for an unbiased stochastic estimate naively, and simple terms like the log determinant of the prior and constants.

\begin{algorithm}[tb]
   \caption{Stochastic Marginal Likelihood Estimate}
   \label{alg:sgd_marglik}
\begin{algorithmic}[1]
  \STATE {\bfseries Input:} dataset $\data$, likelihood, prior, random batch $\sB_m \in \sB$ (partition of input-output indices; $|\sB|=M$), structure (e.g. \ntk or \ggn), param partition $\sP$ (opt.).
    \STATE Let $\sD_m$ denote data points of input indices in $\sB_m$
    \STATE $\sL \gets \tfrac{|\data|}{|\sD_m|} \log p(\data_{m} \given \tilde{\param}, \hyperparam) + \log p(\tilde{\param} \given \hyperparam) - \half \log |\mP_0|$ 
    \IF{structure = \ntk}
        \STATE $\log q_{\tilde{\param}}(\data \given \hyperparam) \gets \sL -\tfrac{M}{2} \log | \mK_{\sB_m}\!+\!\mI |$
    \ELSE
        \STATE $\log q_{\tilde{\param}}(\data \given \hyperparam) \gets \sL - \tfrac{M}{2} \log  |\mH_{\sB_m, \sP} \mP_0\inv + \mI|$
    \ENDIF
 \STATE Return log marginal likelihood estimate $\log \tilde{q}{\paramstar}(\data \given \hyperparam)$.
\end{algorithmic}
\end{algorithm}

In practice, we use our estimators for interleaved optimization of neural network weights and hyperparameters as in \citep{immer2021scalable}.
That is, we take gradient-steps on the hyperparameters every $k$th epoch after an initial burnin of $b$ epochs.
To obtain hyperparameter gradients, we choose a partitioning of the input-output indices $\sI$ into $\sB$, which contains $M$ such batches, and then sample from these uniformly at random.
Further, we resample the partition every epoch.
In \cref{app:illustration}, we show trajectories of this algorithm corresponding to the setting used for illustrating the bounds.

To keep the bounds tight, we recommend output-wise partitioning since it eliminates complexity scaling in the number of outputs $C$ while empirically maintaining a relatively tight lower bound.
In practice, the most scalable bounds for hyperparameter gradients are the \ntk-based bounds and doubly bounds using an efficient parametric approximation like \kfac,
which even work well with relatively small subset sizes.
Interestingly, the output-wise approximation can also be applied to \kfac-\ggn giving a differentiable and scalable alternative for general hyperparameter learning.

\cref{alg:sgd_marglik} shows how to compute stochastic marginal likelihood estimates.
We use automatic differentiation to obtain gradients \wrt $\hyperparam$ to train them with SGD.
To make the bound as tight as possible and thus improve the approximations, we choose the partitioning and the size of its batches so as to fully utilize the available memory in practice.

\begin{table*}[th!]
\begin{center}
\begin{adjustbox}{max width=\textwidth}
\begin{small}
\begin{tabular}{llcccrcccr}
\toprule
&Dataset & \multicolumn{4}{c}{CIFAR-10} & \multicolumn{4}{c}{CIFAR-100} \\
\cmidrule(lr){3-6} \cmidrule(lr){7-10} 
\hyperparam & Estimator & $\log q_{\tilde{\param}}(\data \given \hyperparam)$ & log lik. & acc. [\%] & time & $\log q_{\tilde{\param}}(\data \given \hyperparam)$ & log lik. & acc. [\%] & time \\
\midrule
\multirow{6}{*}{\rotatebox[origin=c]{90}{prior precision}}&\textsc{baseline}       &     - & -1.22 ± 0.04 & 84.6 ± 0.3 &   35\% &     - & -3.35 ± 0.14 & 56.7 ± 1.1 &    8\% \\
&\ntk-500-1    &    -0.87 ± 0.02 & -0.53 ± 0.01 & 86.8  ± 0.1&   60\% &    \textbf{-1.98 ± 0.08} & -2.12 ± 0.04 & 61.7 + 0.6 &   16\% \\
&\kfac-500-1     &    -1.41 ± 0.02 & \textbf{-0.38 ± 0.00} & 88.2 ± 0.1 &   65\% &    -4.73 ± 0.11 & -1.31 ± 0.04 & 63.9 ± 0.9 &   24\% \\
&\kfac-500-10    &    -1.31 ± 0.01 & \textbf{-0.38 ± 0.00} & 88.6 ± 0.1 &   81\% &    -3.95 ± 0.04 & -1.26 ± 0.01 & \textbf{67.9 ± 0.2} &   70\% \\
&\kfac-$N$-1 &    -1.10 ± 0.01 & \textbf{-0.38 ± 0.00} & 88.8 ± 0.1 &   52\% &    -4.31 ± 0.09 & \textbf{-1.22 ± 0.02} & 66.8 ± 0.5 &   15\% \\
&\kfac-$N$-$C$  &    \textbf{-0.79 ± 0.01} & -0.39 ± 0.00 & \textbf{89.2 ± 0.1}  &  100\% &    -2.38 ± 0.03 & -1.65 ± 0.07 & 64.6 ± 0.4 &  100\% \\
\midrule
\multirow{4}{*}{\rotatebox[origin=c]{90}{invariance}}
&\ntk-150-1    &    -0.56 ± 0.05 & -0.42 ± 0.01 & 90.3 ± 0.2 &    9\%  & \textbf{-1.19 ± 0.07} & -1.45 ± 0.03 & 69.5 ± 0.8 & 4\% \\
&\kfac-150-1   &    -0.59 ± 0.08 & \textbf{-0.29 ± 0.00} & 92.5 ± 0.2 &   10\% & -2.22 ± 0.37 & -1.40 ± 0.05 & \textbf{71.8 ± 0.4} & 5\% \\
&\kfac-$N$-1 &    -0.44 ± 0.07 & -0.31 ± 0.01 & \textbf{93.0 ± 0.3} &   23\% & -3.43 ± 0.56 & \textbf{-1.20 ± 0.03} & \textbf{71.7 ± 0.7} & 10\% \\
&\kfac-$N$-$C$  &    \textbf{-0.24 ± 0.02} & -0.33 ± 0.01 & \textbf{93.2 ± 0.0} &  100\% & - & -& - & $\sim$100\% \\
\bottomrule
\end{tabular}
\end{small}
\end{adjustbox}
\end{center}
\vspace{-1em}
\caption{Benchmark of stochastic marginal likelihood estimators on CIFAR classification tasks with a Wide ResNet (16-4) learning layerwise prior precisions (top) and affine invariances (bottom). 
``500-1'' refers to a batch size of 500 and single-output approximation with $N$ and $C$ corresponding to the full-batch setting.
The timing is relative to the full-batch estimator, \kfac-$N$-$C$, which was previously used for these settings.
Our stochastic estimators can perform on par at up to 25-fold speed-up.
Due to the number of classes, \kfac-$N$-$C$ runs out of memory for CIFAR-100.
Performance bold per category if standard errors with best overlap from both sides.}
\label{tab:benchmark_combined}
\end{table*}

\section{Experiments}
\label{sec:experiments}
We experimentally validate the proposed estimators on various settings of marginal-likelihood-based hyperparameter optimization for deep learning.
Overall, we find that the lower bounds using subsets of data or outputs often provide a better trade-off between performance and computational or memory complexity.  
In particular, they remain relatively tight even when applied only on small subsets of data and outputs.
Therefore, they can greatly accelerate hyperparameter optimization with Laplace approximations, making marginal-likelihood optimization possible at larger scale.\footnote{Code: \href{https://github.com/AlexImmer/ntk-marglik}{\texttt{github.com/AlexImmer/ntk-marglik}}}

In our experiments, we optimize prior precision parameters, $\mP_0$, equivalent to weight-decay, per layer of neural networks~\citep{immer2021scalable, antoran2022probabilistic} and learn invariances from data~\citep{van2018learning, immer2022invariance}.
Learning invariances requires differentiating the \ntk or \ggn and therefore acts as a realistic example of gradient-based optimization for general hyperparameters, which should be the long-term goal of Bayesian model selection for neural networks.

In the following, we first discuss the illustrative example used throughout the theoretical development.
Further, we compare our estimators to the full dataset \kfac-Laplace approximation, which previously provided the best results for hyperparameter optimization with Laplace approximations~\citep{immer2021scalable, daxberger2021laplace, immer2022invariance}. 
In this case, we find that estimators based on subsets of data are significantly more efficient and can perform on par even when the dataset is large.
Lastly, we benchmark the two most scalable estimators on TinyImagenet, a setting, where full dataset Laplace approximations failed due to computational costs~\citep{mlodozeniec2023hyperparameter}.

\subsection{Tightness of Bounds, Performance, and Runtime}
\label{sec:illustration}
Throughout the theoretical development, we illustrate the tightness of the derived lower bounds to the linearized Laplace log marginal likelihood~(cf.~\cref{fig:parametric_bound}, \ref{fig:ntk_bound_grid}, \ref{fig:doubly_bound}, and \ref{fig:classwise}).
We use a small 3-layer convolutional network with linear output that has $P\approx 16\,000$ parameters on a random subset of $1000$ MNIST~\citep{lecun2010mnist} digits, which allows analytical computation of both the full \ggn and \ntk.
For the illustration using the rotational invariance, we additionally rotate digits by a random angle $\theta \sim \sU[-\pi, \pi]$. 
We compare the prior precision and rotational invariance hyperparameters with the log marginal likelihood estimate for the same trained neural network.
In \cref{app:illustration}, we additionally show the bound and corresponding test performance when optimizing it during training in both settings.

\paragraph{Parametric estimators.}
\cref{fig:parametric_bound} indicates that block-diagonal and \kfac variant attain almost the same marginal likelihood as a full \ggn and have the same test performance.
It also shows that the diagonal approximation can fail catastrophically due to its high slack in the bound~(cf.~\cref{thm:parametric}) as it is the most refined partitioning $\sP'$ of parameter indices.

\paragraph{Stochastic estimators.}
\cref{fig:ntk_bound_grid} illustrates the proposed subset-of-data marginal likelihood estimators showing that already a small fraction of the input-output pairs leads to tight lower bounds.
However, too small subset sizes can lead to failure when optimizing them, similar to the diagonal approximation.
The doubly lower bound, which allows using structured parametric estimators on subsets of data~(cf.~\cref{cor:doubly_bound}), is displayed in \cref{fig:doubly_bound} and is similarly tight. %
Further, handling the $C=10$ outputs independently for \ntk-based estimators, which corresponds to a partitioning by class, almost performs as well as a full \ntk in \cref{fig:classwise} and is often significantly cheaper.

\begin{figure*}[th!]
    \centering
    \includegraphics[width=\textwidth]{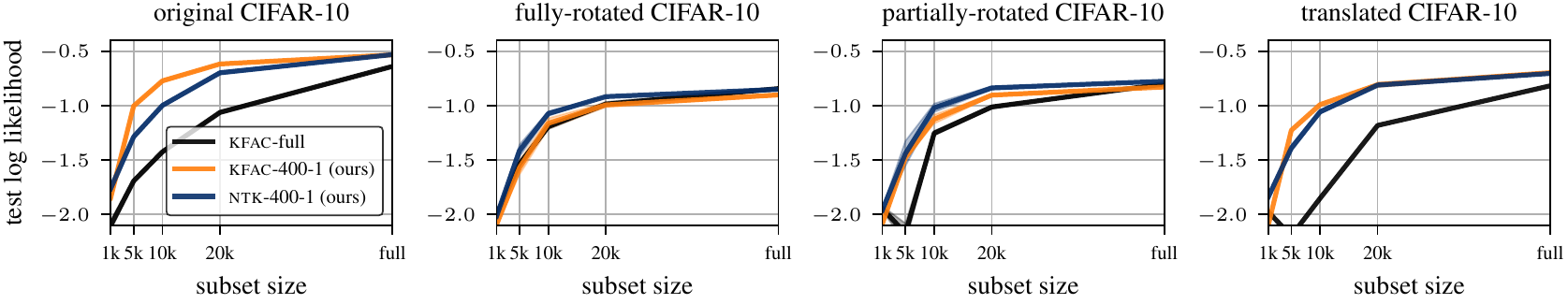}
    \vspace{-2em}
    \caption{Proposed stochastic marginal likelihood estimators using \ntk and \kfac with a fixed batch size of $400$ and class-wise partitioning perform well on subsets and modifications of CIFAR-10 with a ResNet.
    Our estimators can take more stochastic gradient steps than the full \kfac estimator at a faster runtime and thus greatly improve over it in terms of test log likelihood.
    This suggests that using many stochastic gradients can be more effective for hyperparameter learning than taking few exact gradients. More results in \cref{app:subsets}.}
    \label{fig:subsets}
\end{figure*}

\paragraph{Pareto-efficiency.}
\cref{fig:pareto} shows the obtained marginal likelihood estimates versus the runtime of a single estimate (and its gradient \wrt hyperparameters) indicating a Pareto-frontier between the two.
Previously, only the full-batch estimators (black markers) were known, of which only \kfac is both tractable and performant. 
Our derived estimators greatly increase the design space and provide many Pareto-optimal estimators, especially for lower runtime budgets.

\paragraph{Partitioning by class label.}
In \cref{sec:partitioning}, we hypothesized that grouping input-output sets by label information could improve bounds due to higher intra-class correlation than inter-class anti-correlation.
Across all runs, this did not make a significant difference for learning invariances.
However, for optimizing solely the prior precision~(cf.~\cref{app:illustration}) it improves the bound and test log likelihood by $10$\% on average.
This suggests that improved partitioning can help. %

\subsection{Benchmark of Proposed Estimators}
\label{sec:benchmark}

We compare the proposed marginal likelihood estimators to the \kfac-Laplace, the state-of-the-art among Laplace approximations for hyperparameter optimization~\citep{immer2021scalable, daxberger2021laplace, antoran2022probabilistic}. 
In the first setting, we optimize the layer-wise prior precision on a Wide ResNet 16-4~\citep{zagoruyko2016wide} on CIFAR-10~\citep{Cifar10} and CIFAR-100~\citep{cifar100}. 
In the second setting, we additionally optimize learnable invariances, similar to data augmentation, where the Laplace-based method by \citet{immer2022invariance} is extremely costly since it requires more than one pass through the dataset per gradient.
The results in \cref{tab:benchmark_combined} suggest that stochastic marginal likelihood estimators are particularly useful for learning invariances, where they accelerate the runtime up to $25$-fold at similar performance.
In addition to our stochastic estimators based on subsets of data, i.e., \ntk-150-1 and \kfac-150-1 with a batch size of 150 on single outputs, we show a full-dataset stochastic \kfac estimator that obtains a bound by sampling single random outputs~(cf. remark in ~\cref{app:theory}).

\subsection{Behavior with Varying Dataset Size}
Since larger datasets theoretically lead to looser bounds when using constant subsets of data, we investigate how this impacts performance in \cref{fig:subsets}.
In particular, we follow \citet{immer2022invariance} and learn invariances present in transformed versions of CIFAR-10~\citep{Cifar10} on random subsets ranging from $1\,000$ to all $50\,000$ data points.
We follow prior work by considering distribution over affine invariances~\citep{benton2020learning, van2021learning} detailed in \cref{app:subsets}.
We evaluate the test log likelihood of \kfac and \ntk-based estimators on subsets of 400 data points and single independent outputs, \kfac-400-1 and \ntk-400-1.
We compare the approach with the method by \citet{immer2022invariance}, \kfac-full (equal to \kfac-$N$-$C$).

The cheaper marginal likelihood estimates and gradients allow us to do more hyperparameter updates per epoch given equivalent computational constraints.
Compared to \kfac-full, we can therefore increase the number of hyperparameter steps for \kfac-400-1 and \ntk-400-1 from 10 to 100, while still reducing overall training time. 
\cref{fig:subsets} shows that our estimators perform on par or better than the full \kfac variant in terms of test log likelihood, especially, but not only, for small dataset sizes.
We hypothesize for invariance learning that the ability to do more gradient updates is more beneficial than having tighter lower bounds.
Further details on invariance learning, experimental details, and results can be found in \cref{app:subsets}.

\subsection{Scaling to Larger Datasets and Models}
For invariance learning, the results on CIFAR-100 in \cref{tab:benchmark_combined} already indicated that \kfac-$N$-$C$ is not scalable enough to enable gradient-based hyperparameter optimization.
\citet{mlodozeniec2023hyperparameter} also found that it is not possible to run it on the even larger TinyImagenet dataset~\citep{le2015tiny}, which has $N=100\,000$ data points and $C=200$ classes, using a ResNet-50 with roughly 23 million parameters.
Using the two fastest estimators for invariance learning, we show in \cref{tab:tiny} that our subset-of-data estimators enable optimizing hyperparameters in this setting.

We compare to the results provided by \citet{mlodozeniec2023hyperparameter} and use the same architecture but with Fixup~\citep{zhang2019fixup} instead of normalization layers, which would be incompatible with weight decay~\citep{antoran2021linearised}.
Optimizing layer-wise prior precisions can greatly improve over the baseline with default settings and no data augmentation.
Further, our estimators improve over invariance learning using neural network partitioning~\citep{mlodozeniec2023hyperparameter} and Augerino~\citep{benton2020learning}.

\subsection{Practical Considerations}
In our experiments, the proposed estimators using subset-of-data lower bounds using a single output often perform best, in particular using the parametric \kfac variant.
Although, the \ntk-based variant often yields a tighter bound, \kfac-based bounds seem particularly well-suited for marginal-likelihood optimization and are therefore preferable in practice.
As illustrated through invariance learning, our stochastic estimators are efficient for general differentiable hyperparameter optimization, which is an exciting future direction.

\begin{table}[t]
\begin{center}
\begin{adjustbox}{max width=\columnwidth}
\begin{small}
\begin{tabular}{llcc}
\toprule
\hyperparam & Estimator & log likelihood & accuracy [\%] \\
\midrule
\multirow{3}{*}{\rotatebox[origin=c]{90}{prior}} & \textsc{baseline} & -4.42 ± 0.08 & 45.6 ± 0.3 \\
&\ntk-400-1 & -4.16 ± 0.25 & 49.8 ± 0.2 \\
&\kfac-400-1 & \textbf{-2.17 ± 0.02} & \textbf{53.0 ± 0.1} \\
\midrule
\multirow{3}{*}{\rotatebox[origin=c]{90}{invariance}} &\textsc{augerino}
& - & 41.1 ± 0.2\\
& \textsc{partitioned}
& - & 48.6 ± 0.0 \\
& \ntk-70-1 & -3.15 ± 0.08 & 56.3 ± 0.1 \\
& \kfac-60-1 & \textbf{-2.02 ± 0.10} & \textbf{58.4 ± 0.3} \\
\bottomrule
\end{tabular}
\end{small}
\end{adjustbox}
\end{center}
\vspace{-1em}
\caption{Hyperparameter learning on TinyImagenet with ResNet-50 in comparison to Augerino~\citep{benton2020learning} and neural network partitioning taken from \citet{mlodozeniec2023hyperparameter}. While previous Laplace approximations are intractable in this setting, \kfac with a subset size of $60$ data points on single outputs excels.}
\label{tab:tiny}
\end{table}

\section{Related Work}
\label{sec:related_work}

\textbf{Marginal-likelihood optimization}, also referred to as \emph{empirical Bayes}, is the \emph{de-facto} standard for hyperparameter optimization of Gaussian process models~\citep{rasmussen2006gaussian} without validation data, and was used in the early days of Bayesian neural networks~\citep{mackay1992evidence, foresee1997gauss}. Use in modern larger neural networks dwindled, with \citet{blundell2015weight} reporting failure using mean-field variational approximations, although the approach continued to work in Deep Gaussian Processes~\citep{damianou2013deep,dutordoir2020dcgp}, and newer work demonstrates the feasibility in modern neural networks \citep{ober2021global,immer2021scalable}.
\citet{antoran2022sampling} recently proposed an interesting alternative based on sampling, which works for prior precisions but not general hyperparameters.
In the context of deep learning, the marginal likelihood has so far been used to select regularization strength, invariances, architectures, and representations. %
Our derived estimators further extend the family of Laplace approximations and provide alternatives to their parametric approximations.
The partitioning of the kernel in our estimators is similar to the Bayesian committee machine~\citep{tresp2000bayesian, deisenroth2015distributed}, which distributes Gaussian process inference into smaller kernels but does not give a valid lower bound on the marginal likelihood. 
For a detailed discussion on the benefits, but also issues, of the marginal likelihood, we refer to \citet[Sec.~7]{gelman1995bayesian}.

\textbf{\ntk and \ggn}
\citet{jacot2018neural} introduced the neural tangent kernel to characterize training dynamics of neural networks under squared loss.
The training dynamics are even available in a closed-form when considering infinite width.
However, computing kernels in this case is not trivial for common architectures~\citep{novak2019neural}.
Our estimators use the \ntk of neural networks at finite width, sometimes referred to as empirical \ntk~\citep{novak2022fast}.
Interestingly, the \ntk and Gauss-Newton approximation to the Hessian are dual to each other as shown by \citep{khan2019approximate}, and \citet{immer2020improving} for general likelihoods.
Our work builds on this duality and derives novel estimators from the \ntk viewpoint that allow stochastic estimation.
The \ntk is also used to estimate generalization without the marginal likelihood, e.g., in the context of neural architecture search~\citep{park2020towards, chen2021neural}.
Further, the linearized Laplace with \ntk been used to improve posterior predictives~\citep{deng2022accelerated, kim2022scale}.

\section{Conclusion}
\label{sec:conclusion}
In this paper, we have derived stochastic estimators for the linearized Laplace approximation to the marginal likelihood that are suitable objectives for stochastic-gradient based optimization of hyperparameters.
Our estimators are derived from a functional view of the Laplace using the neural tangent kernel and allow to trade off estimation accuracy and speed.
Our experiments show that the (mini-)batch estimators perform on par with previous full-batch estimators but are many times faster. %
This suggests that they could be useful for learning more complex hyperparameters with SGD.
Future research could further find ways to make bounds tighter, for example, by improving partitioning of the \ntk.
Further, it could be interesting to apply the fast estimators to large-scale Bayesian linear models. %

\section*{Acknowledgements}
A.I. gratefully acknowledges funding by the Max Planck ETH Center for Learning Systems (CLS).
The authors thank the reviewers for their constructive feedback and comments, in particular R2, who suggested Figures~\ref{fig:ntk_bound_grid}, \ref{fig:doubly_bound}, and \ref{fig:classwise}.

\bibliography{bibliography}
\bibliographystyle{icml2023}

\newpage
\appendix
\onecolumn
\section{Theoretical Results and Proofs}
\label{app:theory}
The following Theorem and corollary are required to bound the parametric and \ntk-based marginal likelihood approximations as they allow to bound the determinant, or the log-determinant, respectively by ignoring off-diagonal blocks of a matrix.
\begin{theorem}[Fischer's inequality~\citeyearpar{fischer1908det}]
\label{thm:fischer}
For a positive definite matrix $\mM$, as defined hereafter, it holds that
\begin{equation}
    \det \mM = \det \begin{bmatrix}
    \mA & \mB \\
    \mB\transpose & \mC 
    \end{bmatrix}
    \leq 
    \det\begin{bmatrix}
    \mA & \vzero \\
    \vzero & \mC 
    \end{bmatrix}
    = \det (\mA \oplus \mC)
    = \det{\mA} \det{\mC}.
\end{equation}
\end{theorem}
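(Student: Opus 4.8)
The plan is to prove this classical inequality through the Schur complement, which cleanly separates the diagonal-block determinants from the contribution of the off-diagonal coupling $\mB$. First I would use the standard block-determinant factorization
\begin{equation}
    \det \mM = \det \mA \cdot \det(\mC - \mB\transpose \mA\inv \mB),
\end{equation}
which is valid because positive definiteness of $\mM$ forces the principal submatrix $\mA$ to be positive definite and hence invertible (for any nonzero $\vx$, $\vx\transpose \mA \vx = [\vx\transpose, \vzero\transpose]\,\mM\,[\vx\transpose, \vzero\transpose]\transpose > 0$). I denote the Schur complement by $\mS \defeq \mC - \mB\transpose \mA\inv \mB$, which is itself positive definite since $\mM$ is.

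The key step is to compare $\mS$ with $\mC$ in the Loewner order. Because $\mA \succ \vzero$ implies $\mA\inv \succ \vzero$, the correction term satisfies $\mB\transpose \mA\inv \mB \succeq \vzero$, and therefore
\begin{equation}
    \mS = \mC - \mB\transpose \mA\inv \mB \preceq \mC.
\end{equation}
Next I would invoke monotonicity of the determinant on positive definite matrices: if $\vzero \prec \mX \preceq \mY$, then $\det \mX \leq \det \mY$. Applying this with $\mX = \mS$ and $\mY = \mC$ gives $\det \mS \leq \det \mC$, and combining with the factorization yields
\begin{equation}
    \det \mM = \det \mA \cdot \det \mS \leq \det \mA \cdot \det \mC = \det(\mA \oplus \mC),
\end{equation}
which is exactly the claimed bound, with equality precisely when $\mB = \vzero$.

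The main point requiring care is the determinant-monotonicity lemma, since everything else is routine. I would establish it by a congruence/eigenvalue argument: for $\vzero \prec \mX \preceq \mY$, the matrix $\mX^{-1/2} \mY \mX^{-1/2} \succeq \mI$ has all eigenvalues at least $1$, so $\det(\mX^{-1/2}\mY\mX^{-1/2}) = \det \mY / \det \mX \geq 1$. The only subtlety is ensuring all the matrices involved are genuinely positive definite (not merely semidefinite) so that $\mX^{-1/2}$ and $\mA\inv$ exist; this is guaranteed throughout by the hypothesis that $\mM$ is positive definite, which propagates to $\mA$, $\mC$, and the Schur complement $\mS$.
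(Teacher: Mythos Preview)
Your proof is correct. The paper does not actually supply its own argument for this theorem; it simply cites \citet{fischer1908det} and refers the reader to \citet{horn2012matrix} for a proof. Your Schur-complement route---factor $\det\mM = \det\mA \cdot \det(\mC - \mB\transpose\mA\inv\mB)$, observe $\mB\transpose\mA\inv\mB \succeq \vzero$ so the Schur complement is dominated by $\mC$ in the Loewner order, then apply determinant monotonicity---is precisely the standard proof one finds in that reference, and your justification of the monotonicity step via the congruence $\mX^{-1/2}\mY\mX^{-1/2} \succeq \mI$ is clean and complete.
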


This Theorem is from \citet{fischer1908det}.
We refer to \citet{horn2012matrix} for a proof.
We immediately have the following useful corollary negative log-determinants, which show up in the Laplace approximation to the log marginal likelihood.
The result simply follows from the fact that the $\log$ is monotonically increasing.

\begin{corollary}
\label{cor:fischer_logdet}
For a positive definite matrix $\mM$ as defined in \cref{thm:fischer}, the following inequality holds
\begin{equation}
    -\log \det \mM =
    -\log\det\begin{bmatrix}
    \mA & \mB \\
    \mB & \mC 
    \end{bmatrix}
    \geq 
    -\log\det{\mA} -\log\det{\mC}.
\end{equation}
\end{corollary}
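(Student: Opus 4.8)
The plan is to derive the corollary directly from Fischer's inequality (\cref{thm:fischer}) by applying the logarithm. First I would invoke \cref{thm:fischer} to obtain the multiplicative determinant bound
\begin{equation}
    \det \mM = \det \begin{bmatrix} \mA & \mB \\ \mB\transpose & \mC \end{bmatrix} \leq \det \mA \, \det \mC .
\end{equation}
This is the only nontrivial input, and it is assumed as already proven.

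Next I would observe that because $\mM$ is positive definite, its diagonal principal blocks $\mA$ and $\mC$ are themselves positive definite (any principal submatrix of a positive definite matrix is positive definite), so $\det \mM$, $\det \mA$, and $\det \mC$ are all strictly positive. This guarantees that the logarithms below are well defined and that I may freely pass the inequality through $\log$. Since $\log$ is monotonically increasing, applying it to both sides of the Fischer bound preserves the inequality, and by multiplicativity of the determinant I have $\log(\det \mA \, \det \mC) = \log \det \mA + \log \det \mC$, giving
\begin{equation}
    \log \det \mM \leq \log \det \mA + \log \det \mC .
\end{equation}

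Finally I would multiply both sides by $-1$, which reverses the inequality and yields exactly the claimed statement $-\log \det \mM \geq -\log \det \mA - \log \det \mC$. There is essentially no obstacle here: the argument is a one-line consequence of monotonicity of $\log$. The only point requiring any care is ensuring positivity of the three determinants so that the logarithms exist, which I handle by noting that positive definiteness of $\mM$ descends to its diagonal blocks. This corollary is precisely the form needed later to lower-bound the negative log-determinants appearing in the Laplace log marginal likelihood by dropping off-diagonal blocks.
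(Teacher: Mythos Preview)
Your proposal is correct and matches the paper's own justification, which simply notes that the corollary follows from Fischer's inequality because $\log$ is monotonically increasing. Your added remark that positive definiteness of $\mM$ descends to its diagonal blocks (so all three determinants are strictly positive and the logarithms are well defined) is a useful point of rigor that the paper leaves implicit.
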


For our bounds, a more general partitioned form of $\mM$ is required, which corresponds to repeated bounding with \cref{thm:fischer} through partitions.
In particular, we deal with a matrix $\mM \in \R^{M \times M}$ and use indices $1$ through $M$, i.e., $[M] = \{1, \ldots, M\}$ that, for example, allow to denote the $m$th diagonal element as $\emM_{m,m}$.
Further, we introduced a notation based on index sets in \cref{sec:background} that allows to denote blocks or off-diagonal elements of $\mM$.
In particular, let $\sA \subset [M]$ and $\sA' \subset [M]$ disjoint subsets, $\sA \cap \sA' = \emptyset$ of the indices denoting dimensions in $\mM$, then we write $\mM_{\sA, \sA} = \mM_{\sA} \in \R^{|\sA| \times |\sA|}$ for the square block matrix with entries $\emM_{i,j}$ such that $i \in \sA$ and $j \in \sA$.
In line with this, off-diagonal blocks are $\mM_{\sA, \sA'} \in \R^{|\sA| \times |\sA'|}$.

Partitioning the dimension indices $[M]$ of $\mM$ into two disjoint subsets $\sA$ and $\sA'$, we have three blocks, $\mM_{\sA}, \mM_{\sA'}$, and $\mM_{\sA, \sA'}$. 
Since we can re-order the matrix to conform to such blocks, because it relies on simultaneous permutation of rows and columns, we can apply \cref{thm:fischer} to $\mM$ and have
\begin{equation}
    \det \mM = \det \begin{bmatrix}
    \mM_{\sA} & \mM_{\sA, \sA'} \\
    \mM_{\sA', \sA} & \mM_{\sA'} 
    \end{bmatrix}
    \leq 
    \det\begin{bmatrix}
    \mM_{\sA} & \vzero \\
    \vzero & \mM_{\sA'}
    \end{bmatrix}
    = \det (\mM_{\sA} \oplus \mM_{\sA'})
    = \det \mM_{\sA} + \det \mM_{\sA'}.
\end{equation}
Note that the first equality here only holds for the determinant and not for its matrix argument since the partitioning by indices permutes the rows and columns simultaneously, i.e., an even amount of times.
With this, we have the following Lemma

\begin{lemma}
\label{lem:partition_fischer}
For any partitioning $\sP=\{ \sP_s \}_{s=1}^S$ of dimension indices $[M]$ of the positive definite matrix $\mM$, we have
\begin{equation}
\label{app:eq:block_ggn}
    \det \mM = \det
    \begin{bmatrix}
    \mM_{\sP_1} & \mM_{\sP_1, \sP_2}  &\cdots & \mM_{\sP_1, \sP_S} \\
    \mM_{\sP_2, \sP_1} & \mM_{\sP_2, \sP_2} & \cdots & \vdots\\
    \vdots & \vdots & \ddots & \vdots \\
    \mM_{\sP_S, \sP_1} & \cdots & \cdots & \mM_{\sP_S}
    \end{bmatrix}
    \leq \det (\mM_{\sP_1} \oplus \cdots \oplus \mM_{\sP_S})
    = \prod_{s=1}^S \det \mM_{\sP_s},
\end{equation}
and further \emph{refining} the partitioning $\sP$ into $\sP'$, such that each element in $\sP'$ is a subset of $\sP'$ is a futher upper bound. 
\end{lemma}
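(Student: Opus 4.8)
The plan is to reduce the multi-block statement to repeated application of the two-block Fischer inequality (\cref{thm:fischer}) by induction on the number of blocks $S$. The base case $S=1$ is immediate, since then $\sP_1 = [M]$ and the inequality holds with equality. For the inductive step I would split the index set into $\sA = \sP_1$ and $\sA' = \sP_2 \cup \cdots \cup \sP_S$, reorder the rows and columns of $\mM$ simultaneously so that these two groups form contiguous blocks (a permutation that leaves $\det \mM$ unchanged, as observed just before the lemma), and apply the two-block bound to obtain $\det \mM \leq \det \mM_{\sP_1} \det \mM_{\sA'}$.

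The structural observation that makes the induction go through is that $\mM_{\sA'}$, being a principal submatrix of the positive definite matrix $\mM$, is itself positive definite, and that for each $s \geq 2$ the principal submatrix of $\mM_{\sA'}$ indexed by $\sP_s$ coincides with $\mM_{\sP_s}$. Hence $\{\sP_s\}_{s=2}^S$ is a partition of the index set of $\mM_{\sA'}$, and the induction hypothesis yields $\det \mM_{\sA'} \leq \prod_{s=2}^S \det \mM_{\sP_s}$. Combining the two inequalities gives the claimed $\det \mM \leq \prod_{s=1}^S \det \mM_{\sP_s}$.

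For the refinement claim, suppose $\sP'$ refines $\sP$, so that every element of $\sP'$ is contained in exactly one $\sP_s$. I would apply the inequality just established separately to each positive definite block $\mM_{\sP_s}$ with the induced sub-partition $\{\sP'_t : \sP'_t \subseteq \sP_s\}$, giving $\det \mM_{\sP_s} \leq \prod_{\sP'_t \subseteq \sP_s} \det \mM_{\sP'_t}$. Taking the product over $s$ and using that these sub-partitions together exhaust $\sP'$, I obtain $\prod_{s=1}^S \det \mM_{\sP_s} \leq \prod_{t} \det \mM_{\sP'_t}$, i.e.\ the refined product is a further upper bound on $\det \mM$ (equivalently, once negated and log-transformed, a lower bound with more slack).

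The main thing to handle carefully is the bookkeeping rather than any deep obstacle: I must track the permutations correctly and verify the ``submatrix of a submatrix'' identity, ensuring that restricting $\mM$ first to $\sA'$ and then to $\sP_s$ really returns $\mM_{\sP_s}$. Positive-definiteness of principal submatrices is a standard fact requiring no separate argument, and the monotonicity of $\log$ then transfers the determinant bound to the negative log-determinants that appear in the Laplace marginal likelihood.
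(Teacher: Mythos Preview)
Your proposal is correct and is essentially the same argument as the paper's: both peel off one block at a time---the paper phrases this as iterated application of Fischer's two-block inequality to the complements $\sR_s = (\sP_1 \cup \cdots \cup \sP_s)^\complement$, which is exactly your induction unrolled---and both handle the refinement claim by applying the already-established inequality to each positive-definite diagonal block $\mM_{\sP_s}$. Your write-up is in fact slightly more careful about the bookkeeping (positive-definiteness of principal submatrices, the submatrix-of-a-submatrix identification), but the underlying idea is identical.
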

\begin{proof}
To obtain this bound, we iteratively apply \cref{thm:fischer}. 
In particular, let $\sR_s$ be the complement of the first $s$ subsets, i.e., $\sR_s = \{\sP_1, \ldots, \sP_s\}^\complement$, for example, $\sR_1 = \sP_1^\complement$ and $\sR_S = \emptyset$.\footnote{The order $1, \ldots, S$ is arbitrary.}
The complement $(\cdot)^\complement$ here is with respect to the full set of indices $[M]$.
Using \cref{thm:fischer} $S$ times on these complementing sets, we have
\begin{equation*}
\begin{aligned}
    \det \mM &= \det \begin{bmatrix}
    \mM_{\sP_1} & \mM_{\sP_1, \sR_1} \\
    \mM_{\sR_1, \sP_1} & \mM_{\sR_1}
    \end{bmatrix}
    \leq \det \mM_{\sP_1} \det \mM_{\sR_1}
    = \det \mM_{\sP_1} \det \begin{bmatrix}
    \mM_{\sP_2} & \mM_{\sP_2, \sR_2} \\
    \mM_{\sR_2, \sP_2} & \mM_{\sR_2}
    \end{bmatrix}\\
    &\leq \det \mM_{\sP_1} \det \mM_{\sP_2} \det \mM_{\sR_2} 
    = \cdots 
    \leq \prod_{s=1}^S \det \mM_{\sP_s}.
\end{aligned}
\end{equation*}
This proves the main statement of the Lemma. 
By refining the partition, we simply extend the upper bounds since a refinement splits up each element in $\sP$ into at least one subset, which again enables application of \cref{thm:fischer}.
\end{proof}

\parametricthm*
\begin{proof}
Since the log-likelihood and log-prior terms are identical, we only need to inspect the relationship between $-\log |\mH + \mP_0|$ and $-\log |\mH_{\sP} + \mP_0|$.
Due to the assumption that $\mP_0$ is diagonal, it can be added to the individual blocks of $\mH_{\sP}$ since a diagonal is the most \emph{refined} partition possible on the indices $\{1, \ldots, P\}$.
\footnote{In case, $\mP_0$ would not be diagonal and its structure would not correspond to a refinement of $\sP$, one would have to treat it jointly with $\mH$ and apply the bound to $\hat{\mH} \defeq \mH + \mP_0$ and then $\hat{\mH}_{\sP}$.}
The lower bound on the negative log determinant and the increased slack when refining then follows from \cref{lem:partition_fischer}, which upper bounds the product of determinants, and thus lower bounds the negative log determinant~(\cref{cor:fischer_logdet}).
\end{proof}

\kernelthm*
\begin{proof}
As shown by \citet{immer2021scalable} and in \cref{sec:background}, the linearized Laplace approximation can be written in \ntk-form using the matrix determinant Lemma giving 
\begin{align*}
    \log q_\paramstar (\data \given \hyperparam) &= \log p(\data, \paramstar \given \hyperparam) - \half \log | \mJ_\paramstar \mP_0\inv \mJ_\paramstar\transpose \mLambda_\paramstar + \mI| |\mP_0| + \tfrac{P}{2} \log 2\pi \\
    &= \log q(\data, \paramstar \given \hyperparam) - \half \log |\mP_0 | + \tfrac{P}{2} \log 2\pi 
    - \half \log | \mK + \mI| \\
    &\geq \log q(\data, \paramstar \given \hyperparam) - \half \log |\mP_0 | + \tfrac{P}{2} \log 2\pi - \half \log | \mK_{\sB} + \mI | \\
    &= \log q(\data, \paramstar \given \hyperparam) - \half \log |\mP_0 | + \tfrac{P}{2} \log 2\pi - \half {\textstyle \sum_{m=1}^M} \log | \mK_{\sB_m} + \mI |,
\end{align*}
where we first re-order and use our definition of the scaled \ntk~(cf. \cref{sec:background}).
Then, we use the lower bound according to \cref{lem:partition_fischer} further giving us the statement that refinement leads to more slack. 
We again note that adding a diagonal, in this case $\mI$ to $\mK$, can simply be absorbed into the block-matrices and extracted afterwards when applying it.
In particular, define $\hat{\mK}=\mK + \mI$, apply the bound, and we obtain $\hat{\mK}_{\sB_m} = \mK_{\sB_m} + \mI$ for all $m$ and thus $\hat{\mK}_{\sB}=\mK_{\sB} + \mI$.
\end{proof}

\paramthm*
\begin{proof}
The idea is to apply the matrix determinant Lemma to move from the subset of data \ntk bound back to a parametric variant. 
This gives us a log determinant that depends on the \ggn defined on subsets of input-output pairs.
We subtract the log joint, $\log p(\data, \paramstar \given \hyperparam)$, and add $\half \log |\mP_0|$ from to both sides of \cref{eq:parametric_sod} and multiply by $2$ to abbreviate the following equations. 
We then have
\begin{align*}
    -\textstyle{\sum_{m=1}^M} \log |\mK_{\sB_m}+\mI|
    &= -\textstyle{\sum_{m=1}^M} \log |\mJ_{\sB_m} \mP_0\inv \mJ_{\sB_m}\transpose \mLambda_{\sB_m} + \mI|\\
    &= M \log | \mP_0 | - \textstyle{\sum_{m=1}^M} \log |\mJ_{\sB_m} \mP_0\inv \mJ_{\sB_m}\transpose \mLambda_{\sB_m} + \mI| |\mP_0 | \\
    &= M \log | \mP_0 | - \textstyle{\sum_{m=1}^M} \log |\mJ_{\sB_m}\transpose \mLambda_{\sB_m} \mJ_{\sB_m} + \mP_0 | \\
    &= M \log | \mP_0 | - \textstyle{\sum_{m=1}^M} \log |\mH_{\sB_m} + \mP_0 |,
\end{align*}
where $\mJ_{\sB_m}$ corresponds to the Jacobians only for the input-output pairs in $\sB_m$ and $\mLambda_{\sB_m}$ to the log-likelihood Hessian for these outputs~(cf.~\cref{sec:ggn_and_ntk}).
subtracting and adding the removed quantities again concludes the proof. 
\end{proof}

\paragraph{Remark.} The form of \cref{eq:parametric_sod}, in particular $\mH_{\sB_m}$ is reminiscent of a \ggn approximation but just on a subset of input-output pairs.
In particular, it depends on the subset structure chosen for the \ntk bound, and could be a class-wise \ggn or simply on a random subset of data.
Subsets of data do not change the shape of the parametric estimator, i.e., it still requires the log determinant of a $P \times P$ matrix but computing that matrix can be greatly sped up:
apart from the requirement to compute Jacobians for all $N$ data points, a major issue with the \ggn approximation is that its cost scales linearly with the outputs $C$ as well~\citep{botev2017practical}.
Therefore, it is often approximated using sampling via equality to the Fisher information~\citep{martens2015optimizing} for exponential family likelihoods of natural form~\citep{martens2014new}.
By partitioning the input-output pairs $[NC]$ into $C$ partitions, we can compute the \ggn for a single output and use it as a proper lower-bound to the \lap marginal likelihood at the same cost as the Fisher, which does not lead a lower bound of the \lap due to the required sampling approximation.
Further, we can apply our parametric bound in \cref{thm:parametric} to use mixed bounds on subsets of data and subsets of parameter groups enabling, for example, a lower bound to the \lap marginal likelihood using \kfac on a subset of data.

\doublycor*
\begin{proof}
This corollary simply follows from applying the parametric lower bound, \cref{thm:parametric}, to the parametric estimator on a subset of data in \cref{thm:parametric_sod_bound}.
\end{proof}

\section{Computational Considerations and Complexity}
The computational complexity of the proposed estimators as well as the \ggn, \ntk, and their corresponding approximations like \kfac greatly depend on the model architecture.
For simplicity, we assume a neural network with $P$ parameters in $L$ linear hidden layers with each width of $D$, inputs $\vx \in \R^D$ of the same dimensionality, with output dimensionality $C$, i.e., we have $P = DC + \sum_{l\in [L]} D^2$, where $DC$ are the parameters of the linear output layer.
Further, we have $N$ data points.
The complexity of automatically differentiating the log determinant of the \ggn or \ntk is equivalent to computing the log determinant of these matrices defined in \cref{eq:ggn_and_ntk}.
Computing the log determinant of the \ggn first requires computing the \ggn itself, which is \order{NP^2C} for computing the sum of $N$ Jacobian outer products, where each Jacobian is $C \times P$.
The log determinant is then additionally \order{P^3}.
In comparison, for naive computation of the \ntk log determinant, we compute the $NC \times NC$ kernel in \order{N^2C^2P} and its determinant in \order{N^3C^3}.
The complexity for \kfac-\ggn is \order{NLD^2C} for summing $N$ outer products for $L$ layers and its determinant \order{D^3} for each layer giving \order{LD^3}.
We have the following computational and complexities for computing and differentiating the log determinant using automatic differentiation:
\begin{equation}
    \ggn \in \order{NP^2C + P^3} \quad \ntk \in \order{N^2C^2P + N^3C^3} \quad \kfac\textrm{-}\ggn \in \order{NLD^2C + LD^3}.
\end{equation}
While these are for naive estimators that can be improved for certain architectures, e.g., see \citet{novak2022fast} for faster \ntk computations, these complexities describe the worst-case setting.

Our proposed lower bounds based on subsets of inputs and outputs can greatly improve these complexities. 
In particular, we can replace $N$ by a subset size $M \ll N$ and use output-wise bounds to obtain $C=1$.
In our experiments, the fastest and still performant methods are \ntk-$M$-$1$ and \kfac-$M$-$1$, which have a computational and memory complexity for estimation and automatic differentiation of \order{M^2P+M^3} and \order{MLD^2+LD^3}, respectively.
Depending on the architecture, the \ntk estimator can further be faster than \order{M^2P}.
Overall, the proposed methods attain an acceleration at least linear in the number of outputs $C$ and data points $N$. 
\newpage
\section{Additional Experimental Results and Details}
\label{app:experiments}
We provide additional experimental details and results complementing those in the main text here.
For our implementation, we modify and extend the \texttt{asdl} library~\citep{osawa2021asdfghjkl} that offers fast computation of \kfac and \ntk, as well as \texttt{laplace-torch}~\citep{daxberger2021laplace} for the marginal likelihood approximations.
The code is available at \url{https://github.com/AlexImmer/ntk-marglik}.

\subsection{Tightness of Bounds, Performance, and Runtime}
\label{app:illustration}
The experiments illustrating the bounds in the main text described in \cref{sec:illustration} are conducted on randomly chosen $1000$ MNIST subsets for three random seeds and further images are fully rotated at random, i.e., up to $\pm \pi$.
We then use the \textsc{lila} method proposed by \citet{immer2022invariance} to try to learn these underlying invariances, which is essential to generalize well on this task since standard convolutional and fully connected layers are not rotationally invariant.
We use a simple convolutional network with three layers, max pooling, and a linear classification head that totals roughly $16\,000$ parameters so that we can compute and differentiate the full \ggn and \ntk.
We use $30$ augmentation samples to average the outputs and optimize the network parameters, invariance parameters, and prior parameters with Adam~\citep{kingma2014adam}.
For network parameters we use a learning rate of $10^{-3}$ and decay it to $10^{-9}$ using cosine decay and use a batch size of $250$.
The invariance and prior learning hyperparameters follow the settings of \citet{immer2022invariance}: $10$ epochs burnin and then update hyperparameters every epoch with learning rates $0.1$ and $0.05$ for prior precision and invariance parameters, respectively. 
Both are decayed by a factor of $10$ using cosine decay.
We use an optimized network to assess the bounds in Figures~\ref{fig:ntk_bound_grid} to \ref{fig:classwise}.
After convergence, we apply the different bounds and assess them over the entire grid of rotational invariance $\eta \in [0, \pi]$.
Here, we additionally present results for hyperparameter optimization using the same bounds in \cref{app:fig:parametric_kernel_lila} and \cref{app:fig:doubly_classwise_lila}.
The experiments are run on an internal compute cluster with different NVIDIA GPUs.
The timing experiments are run on a single A100 sequentially to ensure comparability.

In addition to the invariance learning experiments, we compare the bounds for varying values of a scalar prior precision in all illustrative figures in the main text.
Here, we show results optimizing the layerwise prior precision in \cref{app:fig:parametric_kernel}, \ref{app:fig:doubly_classwise}, and \ref{app:fig:pareto_prior} again on $1000$ randomly sampled MNIST digits per seed but without rotating them.
When only learning prior precision parameters and not invariances, a slack in the bound translates more directly into a slack in the test performance as can be seen in the figures.
However, it is worth noting that the \ntk and \kfac estimators are significantly cheaper on large batches of data for just optimizing the prior precision, in comparison to learning invariances. 
In \cref{app:fig:pareto_prior}, we see that stochastic \ntk-based estimators are Pareto-optimal in many cases as it was the case for invariance learning.
We also find that small batch sizes do not lead to problems at larger scale~(cf.~\cref{sec:benchmark}) where they perform on par with full dataset marginal likelihood estimators.
In \cref{app:fig:grouped}, we show the effect of class-wise partitioning of batches on the bound in comparison to the random partition.
We find that the bound can become tighter for both prior precision and invariance learning but incurs a slightly higher variance as indicated by the standard error displayed.

\begin{figure}[bh!]
    \centering
    \includegraphics{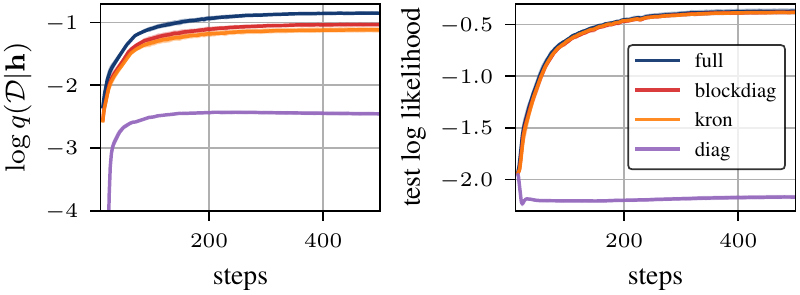}
    \hspace{1em}
    \includegraphics{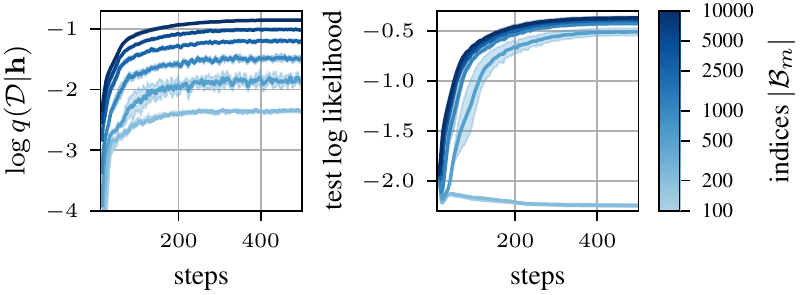}
    \vspace{-1em}
    \caption{Parametric and \ntk lower bound when optimizing invariances on random subsets of size $1000$ from rotated MNIST~\citep{lecun2010mnist}.
    Only small subset sizes and the diagonal approximation fail to learn the invariance and result in good test log likelihood.
    This figure corresponds to \cref{fig:parametric_bound} and \cref{fig:ntk_bound_grid} in the main text, which show the bounds constructed at step $500$ with parameters and hyperparameters trained by the ``full'' variant.}
    \label{app:fig:parametric_kernel_lila}
\end{figure}

\begin{figure}[th!]
    \centering
    \includegraphics{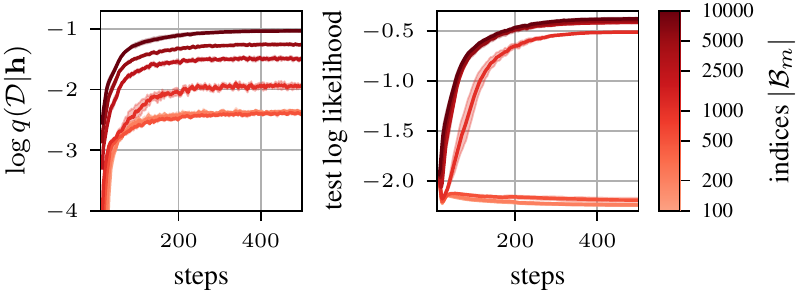}
    \hspace{1em}
    \includegraphics{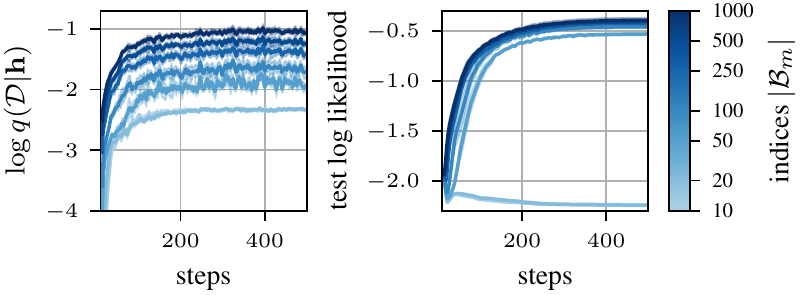}
    \vspace{-1em}
    \caption{Doubly parametric subset-of-data bound on the left with block-diagonal approximation and class-wise \ntk-based approximation. These figures correspond to \cref{fig:doubly_bound} and \cref{fig:classwise} in the main text but use the bounds for optimization of invariances during training. In line with the bounds, already small subset sizes suffice to pick up the invariance in rotated MNIST and achieve good test log likelihood.}
    \label{app:fig:doubly_classwise_lila}
\end{figure}

\begin{figure}[h!]
    \centering
    \includegraphics{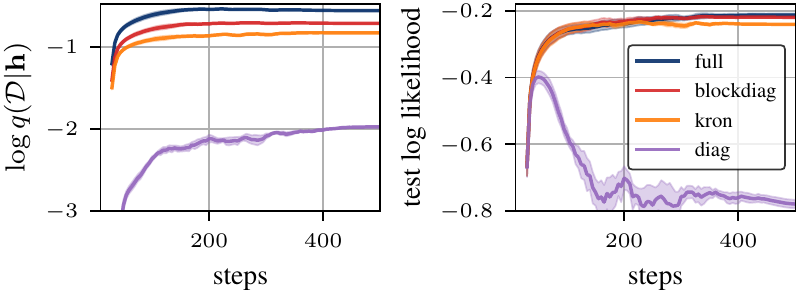}
    \hspace{1em}
    \includegraphics{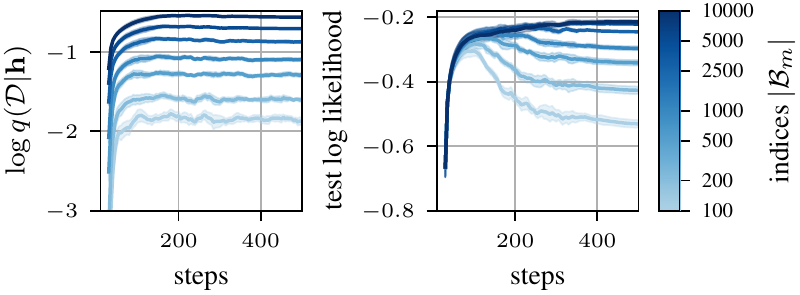}
    \vspace{-1em}
    \caption{Parametric and \ntk lower bound when optimizing prior precision and not invariances on random subsets of size $1000$ from MNIST~\citep{lecun2010mnist}.
    In contrast to invariance learning, lower bounds result in more reduction in test performance for the \ntk-based bounds that use stochastic estimates.
    This figure corresponds to \cref{fig:parametric_bound} and \cref{fig:ntk_bound_grid} in the main text.}
    \label{app:fig:parametric_kernel}
\end{figure}

\begin{figure}[h!]
    \centering
    \includegraphics{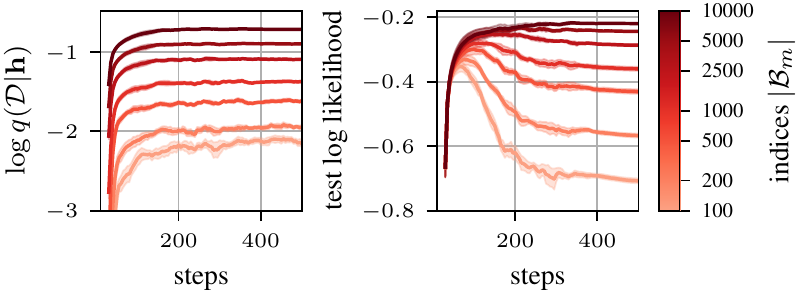}
    \hspace{1em}
    \includegraphics{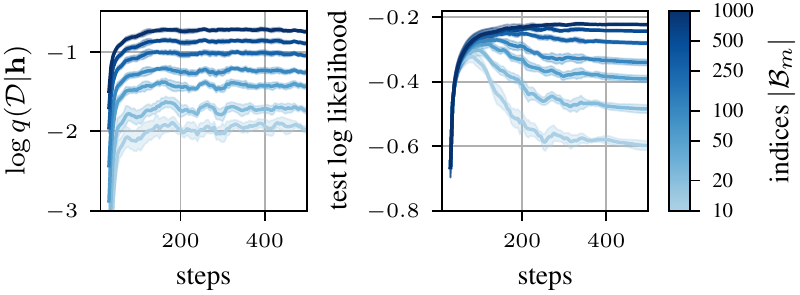}
    \vspace{-1em}
    \caption{Doubly parametric subset-of-data bound with block-diagonal approximation (left) and class-wise \ntk-based approximation (right) for prior precision optimization corresponding to \cref{fig:doubly_bound} and \cref{fig:classwise} in the main text.}
    \label{app:fig:doubly_classwise}
\end{figure}

\begin{figure}[th!]
    \centering
    \includegraphics{figures/grid_bound_kernel_so=False_grouped=False.pdf}
    \hspace{1em}
    \includegraphics{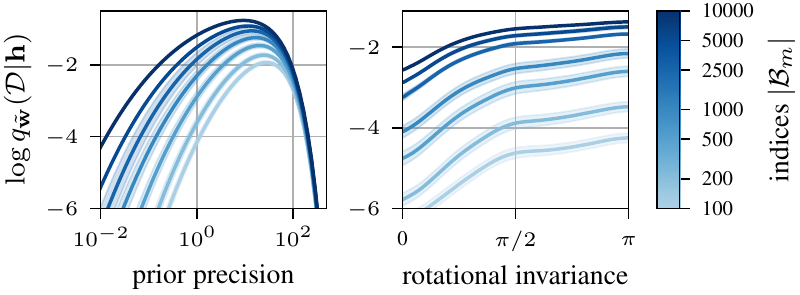}
    \vspace{-1em}
    \caption{Effect of partitioning data by class labels (right) in comparison to random partitioning (left; same figure as \cref{fig:ntk_bound_grid}) as proposed in \cref{sec:partitioning}. Using the class labels can slighty redue the slack of the bound but tends to incur a larger variance in the estimated bound as indicated by the shaded regions denoting one standard error.}
    \label{app:fig:grouped}
\end{figure}

\begin{figure}[h!]
    \centering
    \includegraphics{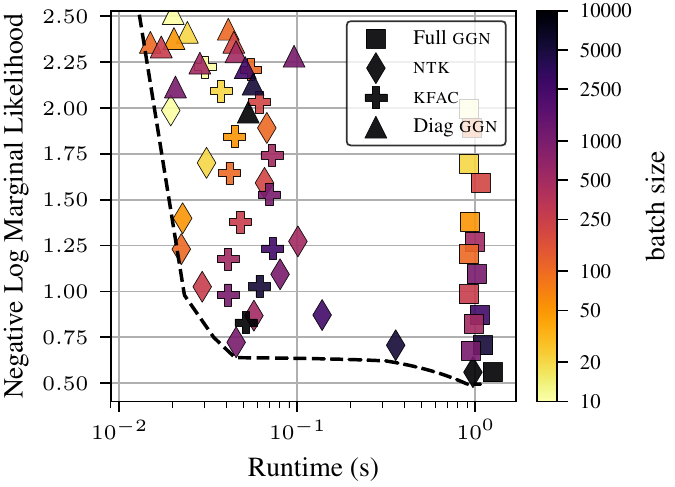}
    \vspace{-1em}
    \caption{Pareto-frontier for prior precision learning runs on random subsets of $1000$ MNIST digits with a small CNN.
    This figure corresponds to \cref{fig:pareto} in the main text, which displays estimators for additional invariance learning, not only learning priors.
    Many of the \ntk-based class-wise estimators are Pareto-optimal, i.e., the achieve the lowest marginal likelihood at a given runtime budget.}
    \label{app:fig:pareto_prior}
\end{figure}

\newpage
\subsection{Additional Details for Estimator Benchmark on CIFAR}
\label{app:benchmark}

Marginal likelihood estimation with the linearized Laplace in previous works was commonly applied to ResNets on CIFAR-10 and CIFAR-100, where it gives remarkable performance improvements over unregularized networks in the case where we have no data augmentation or prior information.
For example~\citet{immer2021scalable} and \citet{daxberger2021laplace} show that it improves the performance by $4\%$ points over the baseline on CIFAR-10.
Here, we reproduce this experiment and compare the performance of the previously used full-batch \kfac estimator, which produced the best results, to the proposed stochastic estimators.
The results are given in \cref{tab:benchmark_combined} in the top block.
Indeed, all estimators improve over the baseline that uses default settings of a Wide ResNet~\citep{zagoruyko2016wide} but without data augmentation and Fixup initialization instead of normalization layers~\citep{zhang2019fixup}. 
We use the Wide ResNet 16-4 with Fixup~\citep{zhang2019fixup}, since a Gaussian prior conflicts with normalization layers~\citep{zhang2019weight, antoran2021linearised}.
We optimize the network for $300$ epochs with a batch size of $128$ using SGD with momentum of $0.9$ and optimize the prior precision every $5$ epochs after a burnin phase of $10$ epochs for with $50$ gradient steps using a learning rate of $0.1$ and Adam~\citep{kingma2014adam} for both CIFAR-10 and CIFAR-100.

In the lower block of \cref{tab:benchmark_combined}, we use our estimators in the context of learning invariance using laplace approximations~\citep[\textsc{lila},][]{immer2022invariance}.
We use the default hyperparameters of their largest-scale experiment, where they learn invariances with Wide ResNets using $20$ augmentation samples.
Their method uses full dataset \kfac for $200$ epochs and takes more than $48$ hours on of training on CIFAR-10 with an NVIDIA A100 GPU because they have to compute a preconditioner on the entire dataset for each hyperparameter gradient. 
On CIFAR-100, due to the scaling with output classes, it is therefore intractable as it would likely take weeks. 
The default settings from CIFAR-10 lead to out-of-memory errors.
In contrast, our stochastic estimators can be directly evaluated on a small batch with a single output and thus improve the runtime by a factor $10$ while maintaining the same performance, i.e., compare $\kfac$-\textsc{150-1} with $\kfac$-$N$-$C$.
Also our single-output \kfac bound already increases runtime by a factor of $5$ and achieves the same performance.
The choice of the batch size in our bounds is such that the GPU memory is utilized as much as possible.
We therefore use NVIDIA A100 GPUs with 80GB memory to make the bounds as tight as possible and improve performance.
To compare the runtimes, all methods use the same GPU type and runtimes are averaged over 3 or 5 seeds for invariance and prior learning, respectively.

\subsection{Details on TinyImagenet Benchmark using ResNet-50}
We largely follow the setup of \citet{mlodozeniec2023hyperparameter} in this experiment and use their results on Augerino~\citep{benton2020learning} and their own network partitioning method for hyperparameter optimization.
The key difference is that we use Fixup~\citep{zhang2019fixup} instead of normalization layers in the ResNet-50 architecture.
Like \citet{mlodozeniec2023hyperparameter}, we use $16$ channels in the first layer and the standard growth factor of $2$ per layer.
The network has roughly $23\,000\,000$ parameters and therefore poses a complex task for differentiable hyperparameter optimization.
Even with 80GB of GPU memory, we can only fit $60$ to $70$ data points for differentiable \kfac or \ntk computations.
However, the performance is still sufficient to improve over the baselines.
We optimize the network weights for $100$ epochs and, as in the case for CIFAR, decay the learning rate with a cosine schedule from $0.1$ to $10^{-6}$.
The hyperparameters are optimized for $50$ steps every epoch after $10$ epochs of burn-in with a learning rate of $0.01$ decayed to $0.001$ with a cosine schedule.
The learning rate for prior parameters is constant at $0.1$.
For only prior precision optimization, we use the default setting by \citet{immer2021scalable} and update the prior parameters for $50$ steps every $5$ epochs after $10$ epochs of burn-in.

\clearpage
\newpage
\subsection{Additional and Detailed Results for Invariance Learning on CIFAR-10 Subsets.}
\label{app:subsets}

It is well known that soft or hard symmetry constraints are beneficial to machine learning methods and neural networks in particular, equivariant symmetries of (group-)convolutional layers being a canonical example \cite{cohen2016group}. Although explicitly embedding symmetry in architectures can be very effective, they need to be known and specified in advance and can not be adapted. The right symmetry can be difficult to choose and misspecified symmetries have the risk of restricting the capacity of a model to fit data \cite{wang2022approximately}. Symmetry discovery methods aim to automatically learn symmetries from available training data, which is a difficult task because symmetries provide constraints and are therefore not necessarily favorable in terms of typical training losses. In invariance learning, we parameterise a learnable distribution $\vg(\vx ; \augmentation)$ over possible transformations on inputs, defined by invariance parameters $\augmentation$. By averaging outcomes of a regular non-invariant neural network $\vf$ over inputs transformed by $\vg$, we obtain a predictor $\tilde{f}$, which is (soft-) invariant to the transformations specified by $\augmentation$. We might take $S$ Monte Carlo samples $\vepsilon_{1},\ldots,\vepsilon_{S} \simiid p(\vepsilon)$~\citep{van2018learning, benton2020learning} to obtain an unbiased estimate of the invariant function $\vfaug$:
\begin{align}
    \vfaug(\vx; \param, \augmentation) = \E_{p(\vx'|\vx,\augmentation)} [\vf(\vx'; \param)]
    = \E_{p(\vepsilon)} [ \vf(\vg(\vx,\vepsilon;\augmentation);\param)]
    \approx \tfrac{1}{S} {\textstyle \sum_{s}} \vf(\vg(\vx,\vepsilon_s;\augmentation);\param)
    \,,
    \label{eq:invariant_function}
\end{align}
where $\vfaug$ is differentiable in both model parameters $\param$ as well as invariance parameters $\augmentation$ through the reparameterisation trick \cite{kingma2013auto}. In our experiments, we consider a combination of uniform distributions on affine generator matrices resulting in a 6-dimensional invariance parameter $\augmentation \in \R^6$ that defines a density over the group of affine transformations, individually controlling x-translation, y-translation, rotation, x-scaling, y-scaling and shearing. Details on the used parameterisation can be found in \citet{benton2020learning} and App. B of \citet{immer2022invariance}.

Jointly learning model parameters \param and invariance parameters \augmentation is not a trivial exercise, as typical maximum likelihood solutions will always select the least restrictive no invariance in order to fit training data best, given a sufficiently flexible model. To overcome this issue, some people have considered the use of validation data, such as cross-validation or more sophisticated approaches, which may require expensive retraining or involved outer loops. Alternatively, Augerino \citep{benton2020learning} learns invariance solely on training data, but relies on explicit regularisation in order to do so, which requires knowledge of the used invariance parameterisation and additional tuning. For a discussion on particular failure cases of this approach we refer to Sec. 2. and App. C. of \citet{immer2022invariance}, which proposes to use scalable marginal likelihood estimates to learn invariance parameters. The marginal likelihood offers a principled way to learn invariances through Bayesian model selection, balancing data fit and model complexity through an Occam's razor effect. Unlike prior works that rely on the true marginal likelihood \cite{van2018learning} or variational lower bounds thereof \citep{van2018learning,van2021learning}, modern differentiable Laplace approximations of the marginal likelihood can scale to large datasets and deep neural networks, such as ResNets. We consider this as a baseline, and denote it by \kfac-full as it uses the full dataset to estimate the marginal likelihood, which leads to a significant slowdown in training compared to classical \map training. To improve estimates, we have derived lower bounds to the marginal likelihood that can be computed much more cheaply on subsets of the training data. This allows us to take more hyperparameter gradient steps per epoch at equivalent or faster training time. We hypothesize that taking more gradient steps can be beneficial, even though individual marginal likelihood estimates provide looser bounds.

We empirically evaluate performance of our method in conjunction with invariance learning on subsets of the same transformed versions of CIFAR-10 as used in \citet{immer2022invariance}. The use of reduced dataset sizes is a common data efficiency benchmark for invariance learning methods \cite{schwoebel2022layer}. We measure performance for varying total dataset sizes where we estimate marginal likelihoods on fixed subsets of 400 datapoints and use independent outputs, \kfac-400-1 and \ntk-400-1. We compare to \kfac-full which always uses the full set of available training data points for its estimates. We use a ResNet architecture with Fixup \cite{zhang2019fixup}, to prevent Gaussian prior conflicts with normalization layers \citep{zhang2019weight,antoran2021linearised}, and take $S{=}20$ Monte Carlo of invariance transformations. We optimize the network for 200 epochs with a batch size of 128 using Adam \cite{kingma2014adam} with cosine annealed learning rates starting at 0.1 for parameters and prior variances and invariance parameter learning rates starting at 0.05 for \kfac-full and 0.01 for \kfac-400-1 and \kfac-400-1, which use more gradient steps.

In the figures below, we show the marginal likelihood estimate (\cref{fig:subsets_marglik}), test log-likelihood (\cref{fig:subsets_nll}) and test accuracy (\cref{fig:subsets_perf}). We find that marginal likelihood estimates of \kfac-full are higher, which is not surprising since \kfac-400-1 and \kfac-400-1 use looser bounds. In terms of test performance, we find that using more approximate gradients can at the same time improve runtime as well as test accuracy and test log-likelihood.

\begin{figure*}[h]
    \centering
    \includegraphics[width=\textwidth]{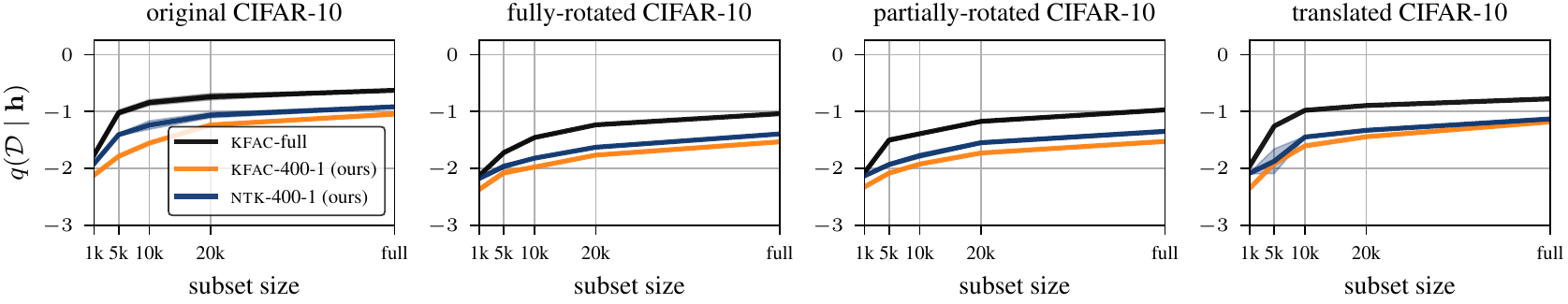}
    \vspace{-2em}
    \caption{Comparison of marginal likelihoods on different subsets of CIFAR-10 datasets.}
    \label{fig:subsets_marglik}
\end{figure*}
\begin{figure*}[h]
    \centering
    \includegraphics[width=\textwidth]{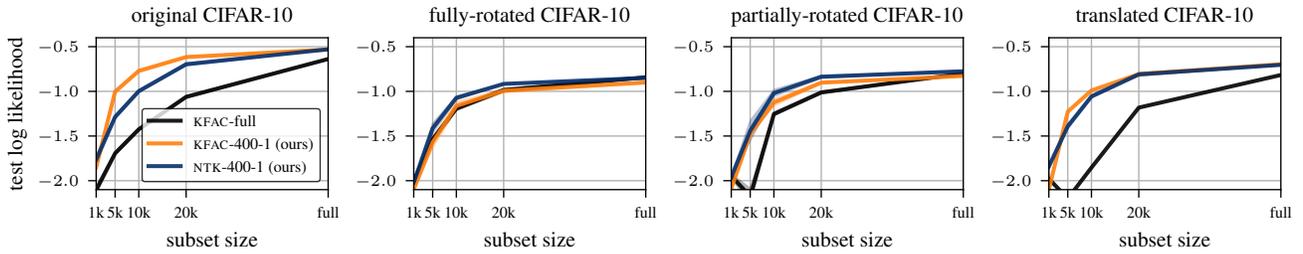}
    \vspace{-2em}
    \caption{Comparison of test log-likelihood on different subsets of CIFAR-10 datasets.}
    \label{fig:subsets_nll}
\end{figure*}
\begin{figure*}[h]
    \centering
    \includegraphics[width=\textwidth]{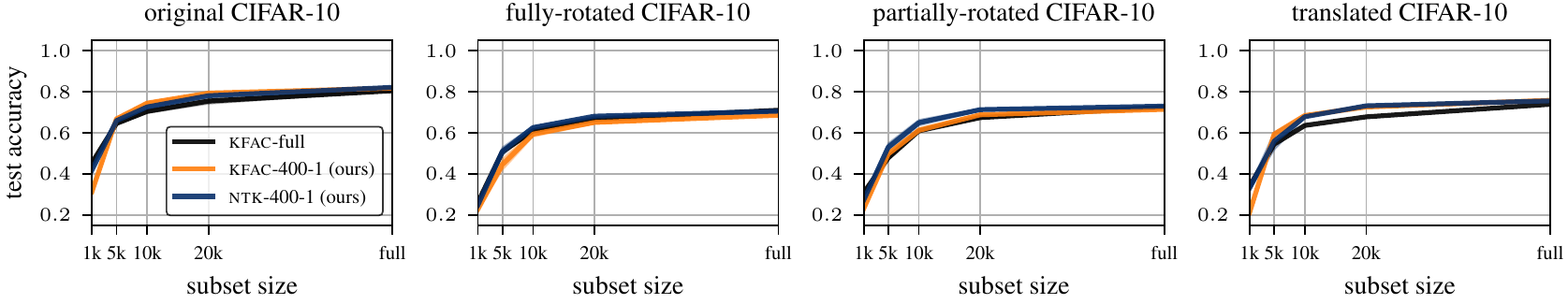}
    \vspace{-2em}
    \caption{Comparison of test accuracy on different subsets of CIFAR-10 datasets.}
    \label{fig:subsets_perf}
\end{figure*}

\end{document}